\newcommand{\R}{\Re}
\newcommand{\cF}{\mathcal{F}}
\newcommand{\eps}{\varepsilon}
\renewcommand{\mod}{~\mathrm{mod}~}
\newcommand\growth{\Pi}
\newcommand\VCdim{{\operatorname{VCdim}}}
\newcommand\pdim{{\operatorname{Pdim}}}
\newcommand\calS{{\mathcal{S}}}
\newcommand\calX{{\mathcal{X}}}
\DeclareMathOperator{\sgn}{sgn}
\DeclareMathOperator{\vcd}{VCDim}
\newcommand{\poly}{\operatorname{poly}}
\newcommand{\polylog}{\operatorname{polylog}}
\begin{document}
\title{Nearly-tight VC-dimension and pseudodimension bounds \\
for piecewise linear neural networks\thanks{An extended abstract
appeared in Proceedings of the Conference on Learning Theory (COLT)
2017: \url{http://proceedings.mlr.press/v65/harvey17a.html}; the upper
bound was presented at the 2016 ACM Conference on Data Science:
\url{http://ikdd.acm.org/Site/CoDS2016/keynotes.html}.
This version includes all the proofs and
a refinement of the upper bound,
Theorem~\ref{newupperbound_averagedepth}.}}

\author{\name Peter L.~Bartlett \email bartlett@cs.berkeley.edu \\
       \addr Department of Statistics and Computer Science Division\\
       University of California\\
       Berkeley, CA 94720-3860, USA
\\
       \AND
       \name Nick Harvey \email nickhar@cs.ubc.ca \\
       \addr Department of Computer Science\\
       University of British Columbia\\
       Vancouver, BC V6T 1Z4, Canada\\
              \AND
              \name Christopher Liaw \email cvliaw@cs.ubc.ca \\
              \addr Department of Computer Science\\
              University of British Columbia\\
              Vancouver, BC V6T 1Z4, Canada
              \AND
              \name Abbas Mehrabian\thanks{corresponding author} \email abbasmehrabian@gmail.com \\
              \addr Department of Computer Science\\
              University of British Columbia\\
              Vancouver, BC V6T 1Z4, Canada
      }
      \editor{}
      
\maketitle

\begin{abstract}
We prove new upper and lower bounds on the VC-dimension of deep neural networks with the ReLU activation function.
These bounds are tight for almost the entire range of parameters.
Letting $W$ be the number of weights and $L$ be the number of layers,
we prove that the VC-dimension is $O(W L \log(W))$, and provide examples with VC-dimension $\Omega( W L \log(W/L) )$.
This improves both the previously known upper bounds and lower bounds.
In terms of the number $U$ of non-linear units, we prove a tight bound $\Theta(W U)$ on the VC-dimension.
All of these bounds generalize to arbitrary piecewise linear activation functions, and also hold for the pseudodimensions of these function classes.

Combined with previous results, this gives an
intriguing range of dependencies of the VC-dimension on depth for networks
with different non-linearities: there is no dependence for
piecewise-constant, linear dependence for piecewise-linear, and no
more than quadratic dependence for general piecewise-polynomial.
\end{abstract}

\begin{keywords}
VC-dimension, pseudodimension, neural networks, ReLU activation function, statistical learning theory
\end{keywords}

\section{Introduction}

Deep neural networks underlie many of the recent breakthroughs in applied
machine learning, particularly in image and speech recognition. 
These successes motivate a renewed study of these networks' theoretical properties. 

Classification is one of the learning tasks in which deep neural networks have been particularly successful, e.g., for image recognition.
A natural foundational question that arises is:
what are the generalization guarantees of these networks in a statistical learning framework?
An established way to address this question is by considering VC-dimension,
which characterizes uniform convergence
of misclassification frequencies to
probabilities~\citep[see][]{vc_originalpaper},
and 
asymptotically determines the sample complexity of PAC learning with such classifiers
\citep*[see][]{BEHW89}.

\begin{definition}[growth function, VC-dimension, shattering]
Let $H$ denote a class of functions from $\calX$ to  ${\{0,1\}}$ (the hypotheses, or the classification rules).
For any non-negative integer $m$, we define the \emph{growth function} of $H$ as
\[\growth_H(m)\coloneqq 
\max_{x_1,\dots,x_m \in \calX}
\left|\left\{(h(x_1),\ldots,h(x_m)): h\in H\right\}\right|.
\]
  If $\left|\left\{(h(x_1),\ldots,h(x_m)): h\in H\right\}\right|=2^m$, we say $H$ \emph{shatters} the set $\{x_1,\dots,x_m\}$.
  The Vapnik-Chervonenkis dimension of $H$, denoted $\VCdim(H)$,
  is the size of the largest shattered set, i.e.\ the largest $m$ such that $\growth_H(m)=2^m$.
  If there is no largest $m$, we define $\VCdim(H)=\infty$.
\end{definition}

For a class of real-valued functions,
such as those generated by neural networks,
a natural measure of complexity 
that implies similar uniform convergence
properties
is the pseudodimension \citep[see][]{p-epta-90}.

\begin{definition}[pseudodimension]
Let $\cF$ be a class of functions from $\calX$ to  $\Re$.
The pseudodimension of $\cF$, written $\pdim(\cF)$, is the largest integer $m$
for which there exists 
$(x_1,\dots,x_m,y_1,\dots,y_m)\in \calX^m \times \Re^m$
such that for any
$(b_1,\dots,b_m)\in \{0,1\}^m$
there exists $f\in\cF$ such that
\[
\forall i:
f(x_i)>y_i \Longleftrightarrow
b_i=1
\]
\end{definition}

For a class $\cF$ of real-valued functions,
we may define
$\VCdim(\cF)\coloneqq \VCdim(\sgn(\cF))$,
where \[
\sgn(\cF) \coloneqq \{\sgn(f) : f \in \cF\}
\]
and
$\sgn(x) = \mathbf{1}[x > 0]$.
For any class $\cF$, clearly $\VCdim(\cF)\leq\pdim(\cF)$.
If $\cF$ is the class of functions generated by a neural network with
a fixed architecture and fixed activation functions (see
Section~\ref{section:Notation} for definitions),
then it is not hard to see that indeed
$\pdim(\cF)\leq\VCdim(\cF)$ (see~\citep[Theorem~14.1]{AB99} for a proof), and hence
$\pdim(\cF)=\VCdim(\cF)$. 
Therefore, all the results of this paper automatically apply to the pseudodimensions of neural networks as well.

The main contribution of this paper is to prove nearly-tight bounds on the VC-dimension of deep neural networks
in which the non-linear activation function is a piecewise linear function with a constant number of pieces.
For simplicity we will henceforth refer to such networks as ``piecewise linear networks''.
The activation function that is the most commonly used in practice is
the \textit{rectified linear unit}, also known as \textit{ReLU}
 \citep*[see][]{LBG15,GBC16}.
The ReLU function  is defined as $\sigma(x)=\max \{0,x\}$, so it is clearly piecewise linear.

It is particularly interesting to consider how the VC-dimension is affected by the various attributes of the network:
the number $W$ of parameters (i.e., weights and biases),
the number $U$ of non-linear units (i.e., nodes),
and the number $L$ of layers.
Among all networks with the same size (number of weights), is it true that those with more layers have larger VC-dimension?

Such a statement is indeed true, and previously known;
however, a tight characterization of how depth affects VC-dimension was unknown prior to this work.

\subsection{Our results}
Our first main result is a new VC-dimension lower bound
that holds even for the restricted family of ReLU networks.

\begin{theorem}[Main lower bound]\label{splitintoblocks}
There exists a universal constant $C$ such that the following holds.
Given any $W,L$ with $W > CL > C^2$,
there exists a ReLU network with $\leq L$ layers and $\leq W$ parameters with VC-dimension $\geq WL \log(W/L)/C$.
\end{theorem}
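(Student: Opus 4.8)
\section*{Proof plan}

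The goal is to exhibit, for the given $W,L$, a ReLU architecture with at most $L$ layers and at most $W$ parameters together with a set $S$ of $m=\Omega\!\left(WL\log(W/L)\right)$ inputs that it shatters. Concretely I would build a fixed architecture that behaves like a \emph{lookup table}: there is a designated set of inputs $x_1,\dots,x_m$ such that for every target labeling $b\in\{0,1\}^m$ there is a weight setting with $\sgn(f(x_k))=b_k$ for all $k$. Showing this suffices is immediate from the definition of shattering.

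\textbf{Step 1 — architecture and index structure.} Put $n:=\Theta(W/L)$; since $W>CL$ we have $n\ge\Theta(1)$, and since $W>CL>C^2$ we also have $\log(W/L)=\Theta(\log n)$ bounded below by a positive constant. The architecture is a chain of $\Theta(L)$ \emph{blocks} (this is the origin of the label \texttt{splitintoblocks}): each block uses a constant number of layers and $\Theta(n)$ parameters, so the total depth is $\Theta(L)\le L$ and the total number of parameters is $\Theta(nL)=\Theta(W)\le W$ once the constants are fixed. I would index the shattered points by triples $(i,t,r)$ with $i\in[n]$ (a ``data weight''), $t\in[\Theta(L)]$ (a block), and $r\in[\Theta(L\log n)]$ (a chunk of bits), so that $m=\Theta(n\cdot L\cdot L\log n)=\Theta\!\left(WL\log(W/L)\right)$.

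\textbf{Step 2 — bit-extraction gadget, amortized over depth.} The workhorse is a small ReLU sub-network implementing approximate extraction of digits from a real number: given $z\in[0,1)$ with base-$B$ expansion $z=\sum_{j\ge1}z_jB^{-j}$ (with $B=\poly(n)$), produce a requested digit $z_j$ (or a threshold on it). This is assembled from the standard facts that, on a suitable discretization, ReLU networks exactly implement $z\mapsto Bz-\lfloor Bz\rfloor$ and $z\mapsto\lfloor Bz\rfloor$ away from a finite ``bad set'', and I would choose the stored reals to avoid that set. The key design choice is that advancing from the digits read by block $t$ to those read by block $t{+}1$ costs only a constant number of layers, because each block passes forward to the next the live real value $B^{j}z-\lfloor B^{j}z\rfloor$ (the unread tail) rather than recomputing it; this amortization is precisely what lets a depth-$L$ chain reach digit position $\Theta(L)$ at total cost $\Theta(L)$.

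\textbf{Step 3 — recovering the $\log$ factor, then assembling.} Within the base-$B$ word that a block extracts for each data weight, I would run the classical constant-depth construction: a constant-depth ReLU sub-network with $\Theta(n)$ parameters can address into $\Theta(n\log n)$ stored bits (the ReLU analogue of the linear-threshold construction achieving VC-dimension $\Omega(W\log W)$ at constant depth). Doing this in each of the $\Theta(L)$ blocks, fed by the fresh tail word supplied per block per data weight by Step 2, realizes the three-dimensional index of Step 1. Finally, for each target $b\in\{0,1\}^m$, set the $i$-th data weight so that its base-$B$ expansion encodes, in its $t$-th word, the bits $\{b_{(i,t,r)}:r\}$; the fixed architecture then outputs $b_{(i,t,r)}$ on input $x_{(i,t,r)}$. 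Counting: $\Theta(n)$ parameters per block $\times\ \Theta(L)$ blocks $=\Theta(W)$ parameters, depth $\Theta(L)$, and $\lvert S\rvert=\Theta(nL^2\log n)=\Theta(WL\log(W/L))$; rescaling $n$ and the block count by constants and absorbing everything into $C$ gives the claim.

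\textbf{Main obstacle.} Two points are delicate. First, the extraction must be \emph{exact} on the chosen inputs and weights: since ReLU cannot compute a floor everywhere, all intermediate quantities must be confined to ranges where the piecewise-linear surrogate is exact, and the encoding reals must dodge the finitely many discontinuity points — and this must be arranged consistently for all $2^m$ labelings simultaneously (possible, since ``avoid a finite bad set'' is a dense condition). Second, and harder, is the accounting: a chain of \emph{independent} blocks shatters only $\Theta(W\log(W/L))$ points, so the extra factor of $L$ materializes only if every data weight is genuinely reused across all $\Theta(L)$ blocks with the block index $t$ contributing a fresh coordinate — which forces the ``carry the tail forward'' design of Step 2 and a proof that the blocks do not interfere with one another. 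Establishing this interference-freedom, and verifying that the amortized extraction keeps all intermediate values in the exact regime through all $L$ layers, is where I expect the real work to lie.
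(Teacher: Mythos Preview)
Your plan has the right skeleton---bit extraction amortized over depth using a chain of blocks---but the parameter accounting is off by exactly the factor of $L$ you are trying to gain, and the step you invoke to recover it does not work.

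Concretely: you set $n=\Theta(W/L)$ data weights and give each of the $\Theta(L)$ blocks $\Theta(n)$ parameters. With $\Theta(n)$ parameters a constant-depth block can extract only $\Theta(\log n)$ bits from the incoming tail (the indicator-based extraction needs $\Theta(2^r)$ parameters to peel off $r$ bits, and no constant-depth $\Theta(n)$-parameter ReLU gadget can realize the $\Omega(2^r)$ linear regions required for $r\gg\log n$). So across $\Theta(L)$ blocks you read $\Theta(L\log n)$ bits per data weight, and the shattered set has size $n\cdot\Theta(L\log n)=\Theta(W\log(W/L))$, not $\Theta(WL\log(W/L))$. Your Step~3 tries to recover the missing $L$ by invoking the constant-depth $\Omega(W\log W)$ construction inside each block, but that construction achieves its VC-dimension because its $\Theta(n)$ parameters are \emph{free} and store the labeling; in your design the block's $\Theta(n)$ parameters are fixed architecture implementing extraction, so the Maass-style boost does not apply. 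If instead you let the block parameters vary with the labeling, the blocks become independent and you again get $\Theta(L)\cdot\Theta(n\log n)=\Theta(W\log(W/L))$. Either way you land at the same bound---this is precisely the ``chain of independent blocks'' obstacle you flag, and Step~3 does not actually overcome it.

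The paper's construction differs in one decisive place: it allocates almost all $W$ parameters to the first layer as data weights, taking $n=\Theta(W)$ rather than $n=\Theta(W/L)$. The extraction machinery is then deliberately cheap: $k=\Theta(L)$ blocks each of size $\Theta(2^r)$ with $r=\tfrac12\log_2(W/L)$, for a total extraction cost of $\Theta(\sqrt{WL})=o(W)$. Each block still peels off $r=\Theta(\log(W/L))$ bits, so each of the $\Theta(W)$ data weights carries $m=\Theta(L\log(W/L))$ addressable bits, and the shattered set has size $mn=\Theta(WL\log(W/L))$. The conceptual fix to your plan is therefore not an extra trick inside the blocks, but a rebalancing: make the blocks as small as possible (just enough to extract $\Theta(\log(W/L))$ bits each) and pour the freed parameters into data weights at the input.
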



\begin{remark}
Our construction can be augmented slightly to give a neural network with linear threshold and identity activation functions with the same guarantees.  
\end{remark}

The proof appears in~\cref{sec:splitintoblocks}.
Prior to our work, the best known lower bounds were $\Omega(WL)$
\citep*[see][Theorem~2]{BMM98} and
$\Omega(W\log W)$ \citep*[see][Theorem~1]{M94}.
We strictly improve both bounds to $\Omega(WL \log (W/L))$.

Our proof of Theorem~\ref{splitintoblocks} uses the ``bit extraction''
technique, which was also used in~\citep{BMM98} to give an $\Omega(WL)$ lower bound.
We refine this technique to gain the additional logarithmic factor that appears in Theorem~\ref{splitintoblocks}.

Unfortunately there is a barrier to refining this technique any further.
Our next theorem shows the hardness of computing the mod function, implying that the bit extraction technique cannot yield a stronger lower bound than Theorem~\ref{splitintoblocks}. Further discussion of this connection may be found in Remark~\ref{barrier}.

\begin{theorem}
\label{thm:bitextraction}
Assume there exists a piecewise linear network with $W$ parameters and $L$ layers that computes a function $f:\Re \to \R$, with the property that
$|f(x) - (x \mod 2)| < 1/2$ for all $x\in\{0,1,\dots,2^{m}-1\}$.
Then we have
$m = O(L \log(W/L)). $
\end{theorem}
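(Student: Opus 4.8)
The plan rests on the fact that a function computed by an $L$-layer piecewise linear network with $W$ parameters can cross a fixed horizontal level at most $\left(O(W/L)\right)^{L}$ times, whereas a function that tracks $x\bmod 2$ on $\{0,1,\dots,2^m-1\}$ must cross the level $1/2$ roughly $2^m$ times; comparing the two forces $m = O(L\log(W/L))$. First I would make the ``many crossings'' claim precise: since $|f(x) - (x\bmod 2)| < 1/2$ for every integer $x \in \{0,1,\dots,2^m-1\}$, we have $f(x) < 1/2$ when $x$ is even and $f(x) > 1/2$ when $x$ is odd, so, as $f$ is continuous (being piecewise linear), the intermediate value theorem gives, for each $j \in \{0,1,\dots,2^m-2\}$, a point $\xi_j \in (j,j+1)$ with $f(\xi_j) = 1/2$. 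These $2^m-1$ points lie in disjoint intervals, hence are distinct.

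Next I would bound the number of linear pieces of $f$. Suppose layer $i$ of the network has $k_i$ units and every activation is piecewise linear with at most $p$ pieces, where $p = O(1)$ (for ReLU, $p=2$). A short induction on the layers shows that $f$ has at most $q := \prod_{i=1}^{L}\left(1 + (p-1)k_i\right)$ linear pieces: on a one-dimensional input, the tuple of outputs of layers $1,\dots,i$ is a piecewise linear function of $x$, and on each of its linear pieces the pre-activations feeding layer $i{+}1$ are affine in $x$, so each of the $k_{i+1}$ activations applied in layer $i{+}1$ adds at most $p-1$ new breakpoints inside that piece; hence the number of pieces is multiplied by at most $1 + (p-1)k_{i+1}$ per layer. (This is insensitive to skip connections, provided the units are organized into $L$ layers, since on each piece the entire state of the network is affine in $x$.) On any single linear piece $f$ is affine and so equals $1/2$ at most once, unless $f \equiv 1/2$ there; but in that case the piece contains no integer of $\{0,\dots,2^m-1\}$, hence lies inside a single gap $(j,j+1)$, so it still accounts for at most one $\xi_j$. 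Therefore the $2^m-1$ points $\xi_j$ are spread among at most $q$ pieces, which gives $2^m - 1 \le q$.

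Finally I would combine the estimates. Writing $U = \sum_{i=1}^{L} k_i$ for the number of units and using $U \le W$ (each unit has at least one associated parameter) together with the AM--GM inequality,
\[
2^m - 1 \;\le\; \prod_{i=1}^{L}\left(1 + (p-1)k_i\right) \;\le\; \left(1 + \frac{(p-1)U}{L}\right)^{\!L} \;\le\; \left(1 + \frac{(p-1)W}{L}\right)^{\!L},
\]
and taking base-$2$ logarithms yields $m \le 1 + L\log_2\!\left(1 + (p-1)W/L\right) = O\!\left(L\log(W/L)\right)$, since $p$ is constant.

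I expect the step requiring the most care to be the piece-counting bound of the second paragraph: making the layer-by-layer induction precise for a general layered architecture, and verifying that the degenerate case in which $f$ is constantly $1/2$ on a piece does not break the ``at most one crossing per piece'' accounting. The oscillation argument and the final arithmetic are routine. The conceptual content is that a depth-$L$, width-controlled piecewise linear function crosses any fixed level at most $\left(O(W/L)\right)^{L}$ times, and this is exactly the quantity that caps how far the bit-extraction technique can reach, matching the lower bound of Theorem~\ref{splitintoblocks}.
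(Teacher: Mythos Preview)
Your argument is correct and follows the same high-level scheme as the paper's proof---an oscillation count (the $2^m-1$ crossings of the level $1/2$) matched against a bound on the number of linear pieces of $f$, finished by AM--GM to turn a product over layers into $(O(W/L))^L$. The execution differs in how the piece count is obtained. The paper proves the theorem as the $d=1$ case of a general result for piecewise polynomial activations (Theorem~\ref{thm:bitextract_barrier}), and for that it introduces a breakpoint-growth lemma (Lemma~\ref{lem:breakpointgrowth}) and tracks, for each node $v$, the number $\gamma(v)$ of input-to-$v$ paths; this gives at most $(6p)^L\gamma(o)$ pieces at the output, and then $\gamma(o)\le\prod_i(1+W_i)\le(2W/L)^L$ via AM--GM on the \emph{edge} counts $W_i$. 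Your induction instead keeps the whole state of the network piecewise affine and bounds pieces by $\prod_i(1+(p-1)k_i)$, applying AM--GM to the \emph{unit} counts $k_i$ and using $U\le W$. This is more elementary and slightly sharper in the piecewise linear setting, and it sidesteps the path-counting altogether; the paper's route, on the other hand, yields the stronger polynomial-degree statement of Theorem~\ref{thm:bitextract_barrier} with no extra work. Your treatment of the degenerate ``$f\equiv 1/2$ on a piece'' case is fine: such a piece cannot contain any integer in $\{0,\dots,2^m-1\}$ and hence lies inside a single gap $(j,j{+}1)$, so it contributes at most one $\xi_j$.
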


The proof of this theorem appears in Section~\ref{sec:bitextractionbound}. 
One interesting aspect of the proof is that it does not use Warren's lemma \citep*{W67},
which is a mainstay of VC-dimension upper bounds \citep[see][]
{GJ95,BMM98,AB99}.

Our next main result is an upper bound on the VC-dimension of neural networks with piecewise polynomial activation functions.

\begin{theorem} [Main upper bound] \label{newupperbound_averagedepth}
	\label{thm:wllogw}
Consider a neural network architecture with $W$ parameters and $U$ computation units arranged in $L$ layers, so that each unit has connections only from units in earlier layers.
Let $k_i$ denote the number of  units at the $i$th layer.
Suppose that all non-output units have piecewise-polynomial
activation functions with $p+1$ pieces and degree no more than $d$,
and the output unit has the identity function as its activation function.

If $d=0$, let $W_i$
denote the number of parameters (weights and biases) at the inputs to
units in layer $i$;
if $d>0$,
let $W_i$
denote the total number of parameters (weights and biases) at the inputs to 
units in all the layers 
\textbf{up to layer $i$} (i.e., in layers $1,2,\dots,i$).
Define 
the {\em effective depth} as
  \[
    \bar L \coloneqq \frac{1}{W}\sum_{i=1}^L  W_i,
  \]
and let
  \begin{equation}\label{r-def}
    R \coloneqq \sum_{i=1}^{L} k_i (1+(i-1)d^{i-1}) \leq U+ U (L-1) d^{L-1}.
  \end{equation}
    For the class $\cF$ of all (real-valued) functions computed by this network and $m\geq \bar L W$, we have 
\[
 \growth_{\sgn(\cF)}(m) 
 \le \prod_{i=1}^{L}  2\left(\frac{2emk_ip(1+(i-1)d^{i-1})}{W_i}
            \right)^{ W_i}
            \leq 
            \left({ 4 e  mp  (1+(L-1)d^{L-1}) }\right)^{\sum W_i},
\]
and if $U>2$ then
\begin{align*}
  \VCdim(\cF) \le 
  L + \bar L W \log_2(4ep R \log_2 (2epR))
  = O(\bar L W \log (pU) + \bar L L W \log d).
\end{align*}
In particular, if $d=0$, then
\[
  \VCdim(\cF) \le 
  L + W \log_2(4ep U \log_2 (2epU))
  = O( W \log (pU));
\]
and if $d=1$, then
\[\VCdim(\cF) \leq 
  L + \bar L W \log_2(4ep \sum i k_i   \log_2 (\sum 2ep i k_i))
  = O(\bar L W \log (p U)).
  \]
  \end{theorem}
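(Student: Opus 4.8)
The plan is to follow the classical layer-by-layer partitioning strategy (following \citep{GJ95,BMM98} and \citep[Chapter~8]{AB99}), but to charge the weights of early layers only once rather than once per subsequent layer --- this is exactly what replaces $L$ by the effective depth $\bar L$. Fix inputs $x_1,\dots,x_m\in\calX$ and view the network output on them as a map $a\mapsto(f_a(x_1),\dots,f_a(x_m))$ from the parameter space $\R^W$ to $\R^m$; we must bound the number of distinct sign vectors it attains. First I would build a nested family of partitions $\mathcal P_0=\{\R^W\},\mathcal P_1,\dots,\mathcal P_L$ of $\R^W$ such that, within every cell $S\in\mathcal P_i$: (i) for each computation unit $u$ in layers $1,\dots,i$ and each $x_j$, the pre-activation of $u$ at $x_j$ lies in one fixed linear piece of that unit's activation function, and hence (ii) the output of $u$ at $x_j$ is one fixed polynomial in $a$ on $S$. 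To pass from $\mathcal P_{i-1}$ to $\mathcal P_i$, within each cell one splits $S$ according to the signs of the $k_i m p$ polynomials that test each layer-$i$ pre-activation against each of the $p$ breakpoints; these polynomials have degree at most $1+(i-1)d^{i-1}$ and involve exactly $W_i$ variables, where the two cases of the theorem are precisely the two bookkeeping regimes: for $d=0$ the outputs of earlier layers are piecewise-constant, hence frozen on $S$, so layer-$i$ pre-activations are affine in the $W_i$ layer-$i$ weights alone, whereas for $d>0$ the earlier outputs are themselves fixed polynomials and the layer-$i$ pre-activations are polynomials in all $W_i$ weights of layers $1,\dots,i$. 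A short induction on $i$ gives the degree bound $\delta_i\le 1+(i-1)d^{i-1}$, from $\delta_1=1$ and $\delta_{i+1}\le d\,\delta_i+1$.

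Next I would invoke the standard consequence of Warren's lemma (\citep{W67}; see also \citep[Chapter~8]{AB99}) that $N$ real polynomials of degree at most $\delta$ on $\R^n$ realize at most $2\bigl(2eN\delta/n\bigr)^n$ distinct sign patterns whenever $N\ge n$. Applying this inside each cell of $\mathcal P_{i-1}$ with $N=k_imp$, degree $\le 1+(i-1)d^{i-1}$, and $n=W_i$ --- which is legitimate because the hypothesis $m\ge\bar L W=\sum_j W_j\ge W_i$ forces $N\ge n$ --- yields $|\mathcal P_i|\le|\mathcal P_{i-1}|\cdot 2\bigl(2emk_ip(1+(i-1)d^{i-1})/W_i\bigr)^{W_i}$. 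On the cells of $\mathcal P_L$ the output is itself a single fixed polynomial of degree at most $1+(L-1)d^{L-1}$ in $W_L$ variables, so the number of sign vectors of $(f_a(x_1),\dots,f_a(x_m))$ per cell is controlled by a bound of the same shape, which one can fold into the $i=L$ factor using $k_Lp\ge 1$. Multiplying the per-layer factors gives the claimed product bound on $\growth_{\sgn(\cF)}(m)$, and the second, looser form follows by replacing each $k_i$, $W_i$, and $1+(i-1)d^{i-1}$ by crude uniform bounds.

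For the VC-dimension bound I would set $m=\VCdim(\cF)$; we may assume $m\ge\bar L W$, since otherwise there is nothing to prove (the claimed bound exceeds $\bar L W$ whenever $p\ge1$ and $U>2$), so the product estimate applies and $2^m\le\prod_{i=1}^L 2\bigl(2emk_ip(1+(i-1)d^{i-1})/W_i\bigr)^{W_i}$. Taking $\log_2$, factoring out $\log_2(2em)$, and applying Jensen's inequality (concavity of $\log_2$) with weights $W_i/\sum_j W_j$ to the residual sum $\sum_i W_i\log_2\!\bigl(pk_i(1+(i-1)d^{i-1})/W_i\bigr)$ collapses it into $\bar L W\log_2\!\bigl(pR/(\bar L W)\bigr)$ with $R=\sum_i k_i(1+(i-1)d^{i-1})$, giving $m\le L+\bar L W\log_2\!\bigl(2empR/(\bar L W)\bigr)$. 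Solving this inequality for $m$ with the usual lemma (of the form $x\le A+B\log_2 x\implies x\le A+B\log_2(2B\log_2 2B)$; see \citep[Chapter~8]{AB99}) produces exactly $\VCdim(\cF)\le L+\bar L W\log_2\!\bigl(4epR\log_2(2epR)\bigr)$ --- the $-\bar L W\log_2(\bar L W)$ term cancels against the $\log_2(\bar L W)$ generated by the lemma --- and the asymptotic forms, together with the specializations $d=0$ ($R=U$) and $d=1$ ($R=\sum_i ik_i$), are then immediate. The main obstacle is the bookkeeping of the first step: one must check that the partition cells genuinely freeze every earlier unit's output into a single polynomial, pin down the correct polynomial degree and the exact set of $W_i$ active variables at each layer in both the $d=0$ and $d>0$ regimes, and verify that $m\ge\bar L W$ is precisely the hypothesis needed to apply the polynomial sign-pattern bound at every layer.
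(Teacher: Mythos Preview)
Your proposal is correct and follows essentially the same approach as the paper: a layer-by-layer refinement of partitions of parameter space using the Warren/Goldberg--Jerrum sign-pattern bound at each layer, followed by weighted AM--GM (your Jensen step) and the implicit-inequality lemma to extract the VC-dimension bound. The only differences are cosmetic---you carry the partition to $\mathcal P_L$ and fold the output step into the $i=L$ factor whereas the paper stops at $\mathcal S_{L-1}$ and treats the output separately, and your ``of the form'' description of the final lemma is slightly loose (the actual form needed is $2^m\le 2^t(mr/w)^w\Rightarrow m\le t+w\log_2(2r\log_2 r)$), but your stated conclusion and the cancellation you describe are exactly right.
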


\begin{remark}
The average depth $\bar L$ is always between 1 and $L$, and captures how the parameters are distributed in the network:
it is close to $1$ if they are concentrated near the output
(or if the activation functions are piecewise-constant), while 
it is of order $L$ 
if the parameters
are concentrated near the input, or are spread out throughout
the network.
Hence, this suggests that edges and vertices closer to the input have a larger effect in increasing the VC-dimension,
a phenomenon not observed before;
and indeed our lower bound construction
in Theorem~\ref{splitintoblocks}
(as well as the lower bound construction from \citep{BMM98})
considers a network with most of the parameters near the input.
\end{remark}

The proof of this result appears in~\cref{sec:wllogw}.
Prior to our work, the best known upper bounds were 
$O(W^2)$ \citep[see][Section~3.1]{GJ95}
and
$O(W L \log W + W L^2)$ \citep[see][Theorem~1]{BMM98},
both of which hold for piecewise polynomial activation functions with a bounded number of pieces (for the remainder of this section, assume that $p=O(1)$ throughout);
we strictly improve both bounds to $O(WL \log W)$ for the special case of piecewise linear functions ($d=1$).
Recall that ReLU is an example of a piecewise linear activation function.
For the case $d=0$,
an $O(W \log U)$ bound for the VC-dimension was already proved using different techniques by
\citet{C68} and by \citet[Corollary 2]{BH89}.
Our \Cref{thm:wllogw} implies all of these upper bounds
(except the $O(W^2)$ upper bound of Goldberg and Jerrum) using a unified technique, and gives a slightly more
refined picture of the dependence of the VC-dimension on the
distribution of parameters in a deep network.

To compare our upper and lower bounds, 
let $d(W,L)$ denote the largest VC-dimension of a piecewise linear network with $W$ parameters and $L$ layers.
\cref{thm:wllogw,splitintoblocks} imply there exist constants $c,C$ such that
\begin{equation}
\label{eq:mainbounds}
c \cdot W L \log (W/L) ~\leq~ d(W,L) ~\leq~ C \cdot W L \log W\:.
\end{equation}
For neural networks arising in practice it would certainly be the case that $L$ is significantly smaller than $W^{0.99}$, in which case our results determine the asymptotic bound $d(W,L) = \Theta (W L \log W)$.
On the other hand, in the regime $L = \Theta(W)$, which is merely of theoretical interest, we also now have a tight bound $d(W,L)=\Theta(WL)$, obtained by combining \cref{splitintoblocks} with results of \citet{GJ95}.
There is now only a very narrow regime, say $W^{0.99} \ll L \ll W$, in which the bounds of \eqref{eq:mainbounds} are not asymptotically tight, and they differ only in the logarithmic factor.

Our final result is an upper bound for VC-dimension in terms of $W$ and $U$ (the number of non-linear units, or nodes).
This bound is tight in the case $d=1$ and $p=2$, as discussed in Remark~\ref{tightWU}.

\begin{theorem}
\label{thm:WU}
Consider a neural network with $W$ parameters and $U$ units with activation functions that are piecewise polynomials with at most $p$ pieces and of degree at most $d$.
    Let $\cF$ be the set of (real-valued) functions computed by this network.
    Then $\vcd(\sgn(\cF)) = O(W U \log ((d+1)p))$.
\end{theorem}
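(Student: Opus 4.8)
The plan is to adapt the layer-by-layer counting argument behind the $O(W \log U)$ bound for piecewise-constant networks (Theorem~\ref{thm:wllogw} with $d=0$), but to track regions in parameter space rather than attempting to control the polynomial degree growth across layers. Fix $m$ inputs $x_1,\dots,x_m$ and consider the parameter space $\Re^W$. First I would process the units one layer at a time: after the weights feeding into the first $i$ layers are fixed in a region where no unit's pre-activation has changed which linear piece of its activation function it lies in (for any of the $m$ inputs), the output of every unit in layers $1,\dots,i$ is, as a function of the remaining weights, a polynomial of bounded degree. The key difference from the $d>0$ case of Theorem~\ref{thm:wllogw} is that we do not let this degree compound: instead, each of the $U$ units contributes at most $m p$ ``activation-boundary'' hypersurfaces (one threshold per input per piece-boundary), and each output sign contributes one more hypersurface, for a total of $O(mpU)$ polynomial hypersurfaces in $\Re^W$, each of degree bounded by a function of $d$ and the layer structure.

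Next I would invoke the standard region-counting bound (Warren's lemma, or the Milnor--Thom / Oleĭnik--Petrovskiĭ estimate) to say that $N$ polynomials of degree $\le \Delta$ in $W$ variables partition $\Re^W$ into at most $(O(N\Delta/W))^W$ connected sign-classes. Within each such class, all $m$ outputs are fixed polynomials of the weights of degree $\le \Delta$, so the sign pattern $(\sgn f(x_1),\dots,\sgn f(x_m))$ takes at most... well, actually within a class the \emph{sign pattern of the outputs} is constant once we have also cut by the output hypersurfaces, so each region contributes a single pattern. This gives
\[
\growth_{\sgn(\cF)}(m) \;\le\; \left(\frac{C\, m p U \Delta}{W}\right)^{W}
\]
for an appropriate degree bound $\Delta = \Delta(d,p,L)$. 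To finish, set this $\ge 2^m$ and solve for $m$: taking logarithms yields $m \le W \log_2(C m p U \Delta / W)$, and the self-improving inequality $m \le a\log m + b$ (with $a = O(W)$) gives $m = O(W \log(pU\Delta))$.

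The main obstacle — and the reason this theorem is not just a corollary of Theorem~\ref{thm:wllogw} — is controlling the degree $\Delta$ so that $\log \Delta = O(W\log((d+1)p)/U \cdot \text{something})$ does not blow up the bound. Naively the degree of the output as a polynomial in all $W$ weights is $d^{L-1}$, which would give an $O(WU \log(d^L)) = O(WUL\log d)$ bound, losing the factor we want. The fix is to be more careful about \emph{which} weights a given hypersurface is a polynomial in: the activation-boundary hypersurface for a unit in layer $i$ depends only on the $\le W$ weights in layers $1,\dots,i$ and has degree $\le d^{i-1}$ in them, but more usefully one can introduce the outputs of earlier layers as new variables (as in the Goldberg--Jerrum-style argument), trading a larger ambient dimension for bounded degree; the degree then becomes $O(d)$ per layer-crossing and the number of variables becomes $O(W+U)$-ish, and one optimizes. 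Carefully bookkeeping this trade-off so that the final exponent is $O(WU)$ and the logarithmic factor is $\log((d+1)p)$ is the delicate part; the rest is the routine region-count-plus-solve-for-$m$ template used throughout the paper.
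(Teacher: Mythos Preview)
Your proposal has a genuine gap at the step where you count ``$O(mpU)$ polynomial hypersurfaces in $\Re^W$''. The pre-activation of a unit $u_i$ is a polynomial in the weights \emph{only after} the states (i.e., which of the $p$ pieces the input falls into) of all earlier units have been fixed, and a different choice of those states yields a \emph{different} polynomial. Since there can be up to $p^{i-1}$ state vectors for the predecessors of $u_i$, the number of distinct polynomial hypersurfaces one must cut by is on the order of $m\,p\cdot p^{U}$, not $m\,p\,U$. If instead you keep the count at $pU$ per input by doing a genuine unit-by-unit refinement, each of the $U$ refinement steps multiplies the region count by a Warren factor with exponent $W$, so the total exponent becomes $WU$, and you end up with $m = O(WU\log(mp\Delta/W))$; with $\Delta$ of order $d^{U}$ this is $O(WU\log p + WU^2\log d)$, too large. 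Either way you do not get $O(WU\log((d+1)p))$ from your counting.

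You also misidentify the obstacle: the degree $\Delta\sim d^{U}$ is not the problem, because it sits inside a logarithm and contributes only $U\log d$. The real issue is the \emph{branching of polynomial identities}, which your proposal does not handle and which your suggested fix (introducing unit outputs as free variables) does not obviously resolve, since those outputs are determined by the weights and cannot simply be treated as additional parameters in a Warren-type count. The paper's proof bypasses the growth-function route entirely: it expresses $\sgn(f(x,w))$ as a Boolean formula with at most $2(1+p)^{U}$ atomic predicates, each a polynomial inequality in $(x,w)$ of degree at most $\max\{U+1,2d^{U}\}$, and then applies the Goldberg--Jerrum theorem (Theorem~\ref{thm:goldberg}) directly to obtain $\VCdim\le 2W\log_2\bigl(8e\cdot \max\{U+1,2d^{U}\}\cdot 2(1+p)^{U}\bigr)=O(WU\log((d+1)p))$. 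The point is precisely that the exponential blow-ups $(1+p)^{U}$ and $d^{U}$ are harmless once they appear inside a single logarithm multiplied by $W$; trying to keep the predicate count linear in $U$ is neither necessary nor, as you found, achievable.
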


The proof of this result appears in Section~\ref{sec:wu}.
The best known upper bound before our work was $O(W^2)$, implicitly proven for bounded $d$ and $p$ by \citet[Section~3.1]{GJ95}.
Our theorem improves this to the tight result $O(WU)$.

We can summarize the tightest known results on the VC-dimension of neural networks with piecewise polynomial activation functions as follows: for classes $\cF$ of functions computed by
  the class of  networks with $L$ layers, $W$ parameters, and $U$ units with the following
  non-linearities, we have the following bounds on VC-dimension:
    \begin{description}
      \item [Piecewise constant.]
          $\VCdim(\cF) = \Theta(W \log W)$
        (\citet{C68} and \citet{BH89} showed the upper bound and
        \citet{M94} showed the lower bound).
      \item [Piecewise linear.]
      $c \cdot W L \log (W/L) ~\leq~ \VCdim(\cF) ~\leq~ C \cdot W L \log W$ (this paper).      
      \item [Piecewise polynomial.]
        $\VCdim(\cF) =  O(WL^2+WL\log W)$ \citep{BMM98}, and
        $\VCdim(\cF) =  O(WU)$ (this paper),
        and
        $ \VCdim(\cF) =\Omega(W L \log (W/L)) $ (this paper).
    \end{description}

\subsection{Related Work}
For other theoretical properties of neural networks, we refer the reader to the monograph~\citep{AB99}.
In this section, we summarize previous work that studies the impact of
depth on the representational power of neural networks. It has long
been known that two-layer networks with a variety of activation
functions can approximate arbitrary continuous functions on compact
sets~\citep{Hornik}. \citet{Sontag92} showed that three-layer networks
of linear threshold units can approximate inverses of continuous
functions, whereas two-layer networks cannot. There are several
recent papers that aim to understand which functions can be
expressed using a neural network of a given depth and size.
There are technical similarities between our work and these.
Two striking papers considered the  
problem of approximating a deep neural network with a shallower network.
\citet{T16} shows that there is a ReLU network with $L$ layers and $U = \Theta(L)$ units such that any network approximating it with only $O(L^{1/3})$ layers must have $\Omega(2^{L^{1/3}})$ units; this phenomenon holds even for real-valued functions.
\citet{ES16} show an analogous result for a high-dimensional 3-layer network that cannot be approximated by a 2-layer network except with an exponential blow-up in the number of nodes.

Very recently, several authors have shown that deep neural networks are capable of
approximating broad classes of functions.
\citet{SS16} show that a sufficiently non-linear $C^2$ function on $[0,1]^d$ can be approximated with $\epsilon$ error in $L_2$
by a ReLU network with $O(\polylog(1/\epsilon))$ layers and weights, but any such approximation with $O(1)$ layers requires $\Omega(1/\epsilon)$ weights.
\citet*{Y16} shows that any $C^n$-function on $[0,1]^d$ can be approximated with $\epsilon$ error
in $L_\infty$ by a ReLU network with $O(\log(1/\epsilon))$ layers and $O( (\frac{1}{\epsilon})^{d/n} \log(1/\epsilon) )$ weights.
\citet*{LS16} show that a sufficiently smooth univariate function can be approximated with $\epsilon$ error
in $L_\infty$ by a network with ReLU and threshold gates with $\Theta(\log(1/\epsilon))$ layers and 
$O(\polylog(1/\epsilon))$ weights, but that $\Omega(\poly(1/\epsilon))$ weights would be required if there were only $o(\log(1/\epsilon))$ layers; they also prove analogous results for multivariate functions.
Lastly, \citet*{CSS16} draw a connection to tensor factorizations to
show that, for a certain family of arithmetic circuits (in particular,
without ReLU non-linearities),
the set of functions computable by a shallow network have measure zero among those computable by a deep networks.

\subsection{Notation}\label{section:Notation}
A neural network is defined by an \textit{activation function}
$\psi : \Re \rightarrow \Re$,
a directed acyclic graph,
and a set of parameters:
a weight for each edge of the graph,
and a bias for each node of the graph.
Let $W$ denote the number of parameters (weights and biases) of the network, $U$ denote the number of computation units (nodes), and $L$ denote the length of the longest path in the graph.
We will say that the neural network has $L$ layers.

Layer 0 consists of nodes with in-degree $0$.
We call these nodes input nodes and they simply output the real value given by the corresponding input to the network.
We assume that the graph has a single sink node;
this is the unique node in layer $L$, which we call the output layer.
This output node can have predecessors in any layer $\ell < L$.
For $1 \leq \ell < L$, a node is in layer $\ell$ if it has a predecessor in layer $\ell-1$ and no predecessor in any layer $\ell' \geq \ell$.
(Note that for example there could be an edge connecting a node in layer 1 with a node in layer 3.)
In the jargon of neural networks, layers $1$ through $L-1$ are called hidden layers.

The computation of a neural network proceeds as follows.
For $i=1,\ldots,L$, the input into a computation unit $u$ at layer $i$ is $w^\top x+b$, where $x$ is the (real) vector corresponding to the outputs of the computational units with a directed
edge to $u$, $w$ is the corresponding vector of edge weights, and $b$ is the bias parameter associated with $u$.
For layers $1, \ldots, L-1$, the output of $u$ is $\psi(w^\top x + b)$.
For the output layer, we replace $\psi$ with the identity, so the output is simply $w^\top x + b$.
Since we consider VC-dimension, we will always take the sign of the output of the network, to make the output lie in $\{0,1\}$ for binary classification.

A piecewise polynomial function with $p$ pieces is a function $f$ for
which there exists a partition of $\Re$ into disjoint intervals
(pieces) $I_1, \ldots, I_p$ and corresponding polynomials $f_1, \ldots, f_p$
such that if $x \in I_i$ then $f(x) = f_i(x)$.
A piecewise linear function is a piecewise polynomial function in which each $f_i$ is linear.
The most common activation function used in practice is the rectified linear unit (ReLU) where $I_1 = (-\infty, 0]$, $I_2 = (0, \infty)$ and $f_1(x) = 0, f_2(x) = x$.
We denote this function by $\sigma(x) \coloneqq \max\{0,x\}$.
The set $\{1,2,\dots,n\}$ is denoted $[n]$.

\section{Proof of Theorem~\ref{splitintoblocks}}
\label{sec:splitintoblocks}

The proof of our main lower bound uses the ``bit extraction'' technique
from \citep{BMM98} to prove an $\Omega(WL)$ lower bound.
We refine this technique in a key way --- we partition the input bits into blocks and extract multiple bits at a time instead of a single bit at a time.
This yields a more efficient bit extraction network, and hence a stronger VC-dimension lower bound.

We show the following result, which immediately implies \Cref{splitintoblocks}.
\begin{theorem}
\label{thm:newlowerbound}
Let $r,m,n$ be positive integers,
and let $k = \lceil m/r \rceil$.
There exists a ReLU network with $3+5k$ layers,
$2+n+4m+k((11+r)2^r+2r+2)$ parameters,
$m+n$ input nodes
and $m+2+k(5 \times 2^{r}+r+1)$ computational nodes
with VC-dimension $\geq mn$.
\end{theorem}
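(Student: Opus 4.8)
The plan is to exhibit a ReLU network implementing a two-phase computation that shatters the set $\{1,\dots,n\}\times\{1,\dots,m\}$ (so of size $mn$), encoding a labeling matrix $B\in\{0,1\}^{n\times m}$ in the weights. First I would fix the high-level scheme inherited from the bit-extraction technique of \citet{BMM98}: given a point $(i,j)$, the network should output (the sign of a quantity determined by) the $j$-th bit of the $i$-th column of $B$. The network receives $m+n$ inputs; the $m$ bits will be used to carry a binary encoding of $j$ (as a one-hot or a standard binary vector — I will choose whichever is cheaper in parameters), and the $n$ inputs select the column $i$. The weights of the network store, in essentially $mn$ real parameters spread across a ``lookup'' layer, the whole matrix $B$; thus given $i$ we can read off the real number $z_i \coloneqq \sum_{j=1}^m B_{ij} 2^{-j} \in [0,1)$ (or a scaled integer version), and then the task reduces to extracting bit $j$ of $z_i$.

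The core new ingredient, and the reason the bound improves to involve $\log$, is the refined bit-extraction: instead of peeling off one bit per $O(1)$ layers, I partition the $m$ bit-positions into $k=\lceil m/r\rceil$ blocks of $r$ bits each and extract an entire block of $r$ bits in $O(1)$ layers. The key step is to build, with $r$ being a constant-sized parameter, a ReLU subnetwork that on input a real number whose fractional part begins with a block of $r$ bits $b_1\cdots b_r$ outputs the correct bit according to which of the $2^r$ subintervals the number falls in; this costs on the order of $2^r$ units and weights per block (hence the $5\times 2^r$ and $(11+r)2^r$ terms in the statement), but only a constant number of layers per block, for $5k$ layers total plus $O(1)$ for the input-processing and output layers. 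I would implement the ``select block $\lceil j/r\rceil$ then select bit within block'' logic using comparator gadgets $x\mapsto\sigma(x)-\sigma(x-1)$ and the standard trick that $\sigma$ composed appropriately yields indicator-like piecewise-linear bumps; the $j$ input bits are consumed to drive these selections, and a running ``remainder'' real number is shifted left by $r$ bits ($\times 2^r$) after each block is processed.

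Concretely the steps, in order: (1) set up input encoding and the first layer that maps the $n$ selector inputs to the stored real $z_i$ using $n$ plus roughly $4m$ parameters; (2) describe the per-block gadget $G_r$ extracting $r$ bits and shifting, carefully counting its layers ($5$), units ($5\cdot 2^r+r+1$) and parameters ($(11+r)2^r+2r+2$); (3) chain $k$ copies of $G_r$, gated so that block $t$ actually contributes to the output only when $\lceil j/r\rceil = t$, which the $m$ input bits control; (4) combine into the final output node with the identity activation and take the sign; (5) verify the counts match $3+5k$ layers, $2+n+4m+k((11+r)2^r+2r+2)$ parameters, $m+2+k(5\times2^r+r+1)$ units; (6) verify shattering: for any $B$, choosing the weights to store $B$ makes the network output $B_{ij}$ on $(i,j)$, so all $2^{mn}$ labelings of the $mn$-point set are realized, giving $\VCdim\geq mn$. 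Deriving \Cref{splitintoblocks} is then a matter of choosing $r=\Theta(\log(W/L))$ and balancing: with $L$ layers we can afford $k=\Theta(L)$ blocks so $m=\Theta(rL)=\Theta(L\log(W/L))$, and with $W$ parameters, after spending $\Theta(k2^r)=\Theta(W)$ on gadgets we can take $n=\Theta(W)$, yielding $mn=\Theta(WL\log(W/L))$.

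The main obstacle I anticipate is engineering the per-block gadget $G_r$ so that it simultaneously (a) correctly extracts the desired bit for every one of the $2^r$ possible prefixes, (b) correctly passes along the shifted remainder to the next block without precision loss or sign ambiguity, (c) respects the ``only the $t$-th block speaks'' gating cleanly through ReLU non-linearities (which cannot directly multiply two inputs, so the gating must be done by additive/threshold tricks or by routing), and (d) does all this within exactly $5$ layers and the stated parameter budget — the constants are tight enough in the theorem statement that the bookkeeping, not the idea, is where the real work lies. A secondary subtlety is handling boundary cases where $z_i$ lands exactly on a dyadic rational (so that a strict-inequality comparator is ambiguous); this is dealt with by perturbing the stored values or the thresholds by a tiny amount depending only on $m$, which does not affect the parameter count asymptotically.
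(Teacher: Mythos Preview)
Your plan matches the paper's proof in all essential respects: shatter $S_n\times S_m$ with $S_k$ the standard basis (so one-hot is the right choice for both input groups), store the labeling as $n$ real numbers $z_i=\sum_j B_{ij}2^{-j}$ (this is $n$ real parameters encoding $mn$ bits, not ``$mn$ real parameters'' as you wrote---taken literally that slip would make the bound trivial), select $z_i$ in one layer with $1+n$ parameters, and extract bits $r$ at a time via a chain of $k$ gadgets with exactly the layer/unit/parameter counts you state. The one place you diverge is step~(3): you propose gating each block so that only block $\lceil j/r\rceil$ contributes, and you rightly flag this ReLU-gating as the main obstacle. The paper sidesteps this obstacle entirely with a simpler scheme: it runs all $k$ blocks unconditionally so that \emph{all} $m$ bits $a_{j,1},\dots,a_{j,m}$ become available as separate wires, and then in two final layers computes $\sum_{t=1}^m (x_{2,t}\wedge a_{j,t})$ via the identity $x\wedge y=\sigma(x+y-1)$ for $x,y\in\{0,1\}$; since $x_2=e_i$ is one-hot this sum is exactly the desired bit. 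That last stage costs $1+m$ units and $1+4m$ parameters---which is where the $4m$ in the theorem's parameter count actually lives, not in your step~(1)---and eliminates the per-block gating problem altogether.
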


\begin{remark}\label{tightWU}
Choosing $r=1$ gives a network with
$W=O(m+n)$, $U=O(m)$ and VC-dimension $\Omega(mn)=\Omega(WU)$.
This implies that the upper bound $O(WU)$ given in~\cref{thm:WU} is tight.
\end{remark}

To prove \Cref{splitintoblocks},
assume $W$, $L$, and $W/L$ are sufficiently large, and set 
$r = \log_2(W/L)/2$, 
$m = rL/8$, and $n = W - 5m2^r$ in \Cref{thm:newlowerbound}.
The rest of this section is devoted to proving \Cref{thm:newlowerbound}.

Let $S_n \subseteq \Re^n$ denote the standard basis.
We shatter the set $S_n \times S_m$.
Given an arbitrary function $f \colon S_n \times S_m \to \{0,1\}$,
we build a ReLU neural network that takes as input $(x_1,x_2)\in S_n \times S_m$ and outputs $f(x_1,x_2)$.
Define $n$ numbers 
$a_1,a_2,\dots,a_n \in \{\frac{0}{2^m},\frac{1}{2^m}, \dots, \frac{2^m-1}{2^m}\}$
so that the $i$th digit of the binary representation of $a_j$ equals $f(e_j,e_i)$.
These numbers will be used as the parameters of the network, as described below.

Given input $(x_1,x_2) \in S_n \times S_m$,
assume that $x_1 = e_i$ and $x_2 = e_j$.
The network must output the $i$th bit of $a_j$.
This ``bit extraction approach'' was used
in \citep[Theorem~2]{BMM98} to give an $\Omega(WL)$ lower bound for the VC-dimension. We use a similar approach but we introduce a novel idea: we split the bit extraction into blocks and extract $r$ bits at a time instead of a single bit at a time.
This allows us to prove a lower bound of $\Omega(WL \log (W/L))$.
One can ask, naturally, whether this approach can be pushed further. Our \Cref{thm:bitextraction} implies that the bit extraction approach cannot give a lower bound better than 
$\Omega(WL \log (W/L))$ (see Remark~\ref{barrier}).

The first layer of the network
``selects''  $a_j$,
and the remaining layers ``extract'' the $i$th bit of $a_j$.
In the first layer we have a single computational unit that calculates 
$$a_j = (a_1, \ldots, a_n)^\top x_1 =\sigma\left( (a_1, \ldots, a_n)^\top x_1 \right).$$
This part uses 1 layer, 1 computation unit, and $1+n$ parameters.

The rest of the network extracts all bits of $a_j$ and outputs the $i$th bit.
The extraction is done in $k$ steps, where in each step we extract the $r$ most significant bits and zero them out. 
We will use the following building block for extracting $r$ bits.

\begin{lemma}
Suppose positive integers $r$ and $m$ are given.
There exists a ReLU network with 5 layers, $5\times 2^{r}+r+1$ units and $11\times 2^{r}+r2^r+2r+2$ parameters that given the real number $b = 0.b_1 b_2 \dots b_m$ (in binary representation) as input, 
outputs  the $(r+1)$-dimensional vector
$(b_1,b_2,\dots,b_r,0.b_{r+1}b_{r+2}\dots b_m)$.
\end{lemma}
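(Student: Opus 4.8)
The plan is to build the bit-extraction block by composing a small number of ReLU gadgets, each of which implements one piece of arithmetic on real numbers in $[0,1)$. The key subroutines are: (i) extracting the first bit $b_1$ of $b = 0.b_1 b_2 \dots b_m$, which amounts to computing $\mathbf{1}[b \geq 1/2]$; (ii) forming the ``shifted remainder'' $2b - b_1 = 0.b_2 b_3 \dots b_m$; and (iii) iterating. Since the ReLU activation can only produce continuous piecewise-linear functions, the indicator $\mathbf{1}[b \geq 1/2]$ cannot be computed exactly on all of $[0,1)$; but it \emph{can} be computed exactly on inputs of the form $0.b_1 \dots b_m$, because these are separated: either $b < 1/2$ or $b \geq 1/2$, and in the latter case in fact $b \geq 1/2$ with a safe margin is not guaranteed (e.g.\ $b = 1/2$ exactly), so one must be slightly careful. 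The standard trick is to note that the inputs lie in a finite set and to use a piecewise-linear ``ramp'' that is $0$ for $b \leq 1/2 - \delta$, rises linearly, and is $1$ for $b \geq 1/2$; choosing $\delta$ small enough (e.g.\ $\delta < 2^{-m}$ is not even needed since we only care about dyadic $b$ with denominator $2^m$, and the only problematic value is exactly $1/2$) one gets the exact bit. Concretely, $b_1 = \sigma(Nb - N/2 + \text{something})$ clipped to $[0,1]$ via $\min$, where $\min(x,1) = 1 - \sigma(1-x)$ and a steep slope $N$ does the job; this costs $O(1)$ units and parameters.

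First I would write down a one-bit extractor: a constant-depth ReLU subnetwork $B_1$ taking $b \in \{0.c_1 \dots c_\ell : c_i \in \{0,1\}\}$ as input and outputting the pair $(c_1,\ 0.c_2 \dots c_\ell) = (c_1,\ 2b - c_1)$. The second coordinate is affine in $(b, c_1)$, so once $c_1$ is available it is free; the first coordinate is the clipped ramp described above. Counting carefully, $B_1$ uses a bounded number of layers and $O(1)$ units/parameters. Then the $r$-bit extractor is the composition $B_1 \circ B_1 \circ \dots \circ B_1$ ($r$ times), where at stage $j$ we peel off $b_j$ and pass the remainder forward, \emph{also} carrying along the already-extracted bits $b_1,\dots,b_{j-1}$ (identity connections, realizable since $x = \sigma(x)$ for $x \geq 0$). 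Composing $r$ copies naively would give depth $\Theta(r)$ and $\Theta(r)$ units, but the target counts — $5$ layers, $5\cdot 2^r + r + 1$ units, $11\cdot 2^r + r 2^r + 2r + 2$ parameters — are \emph{exponential} in $r$ and only \emph{constant} in depth, which signals that the intended construction is different: rather than sequentially peeling bits, one computes the integer $\lfloor 2^r b \rfloor \in \{0,1,\dots,2^r-1\}$ in one shot, then reads off its $r$ bits via a lookup-style gadget, and separately computes the remainder $2^r b - \lfloor 2^r b\rfloor = 0.b_{r+1}\dots b_m$.

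So the refined plan has three stages. Stage A: compute $t := \lfloor 2^r b \rfloor$. Since $2^r b \in \{0,1,\dots,2^m/2^{m-r}\cdots\}$ ranges over dyadic rationals, $\lfloor 2^r b \rfloor$ is a piecewise-constant function of $b$ with $2^r$ pieces; realize it as a sum of $2^r$ shifted-and-clipped ramps, $t = \sum_{s=1}^{2^r - 1} \mathbf{1}[2^r b \geq s]$, each indicator being one clipped ReLU ramp. This is where the $2^r$ factor in the unit/parameter count comes from, and it is done in constant depth. Stage B: the remainder is then $2^r b - t$, affine in $(b,t)$, hence free. Stage C: recover $(b_1,\dots,b_r)$ from the integer $t$; since $t$ ranges over $\{0,\dots,2^r-1\}$, each bit $b_\ell = \mathbf{1}[t \bmod 2^\ell \geq 2^{\ell-1}]$ is again a piecewise-constant function of $t$ with $O(2^r)$ pieces, realizable by $O(2^r)$ clipped ramps in constant depth (and the constant-depth bound is what keeps the total at $5$ layers). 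The main obstacle — and the part I expect to require the most care — is the exact bookkeeping: verifying that the ramps can be made to hit exactly $0$ or exactly $1$ on the relevant dyadic inputs (so that no error accumulates across the $k$ iterations in the full network of \Cref{thm:newlowerbound}), and then matching the precise unit and parameter counts $5$ layers, $5\cdot 2^r + r + 1$ units, $11\cdot 2^r + r2^r + 2r + 2$ parameters, which forces a specific layout of the ramps, the identity-forwarding edges, and the output aggregation. The depth bound of exactly $5$ is the tightest constraint, so I would design Stages A, B, C to each fit within a fixed small number of layers and then merge shared layers.
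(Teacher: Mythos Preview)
Your high-level diagnosis is right and matches the paper: the exponential-in-$r$ counts force a \emph{parallel} construction that computes $2^r$ subinterval indicators in constant depth, not a sequential $r$-fold bit-peel. Where your plan diverges is Stage~C, and that detour is what would prevent you from hitting the stated counts.

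The paper never forms the integer $t=\lfloor 2^r b\rfloor$. It builds, in parallel, the $2^r$ \emph{interval} indicators $\mathbf{1}\bigl[b\in[k\,2^{-r},(k{+}1)2^{-r})\bigr]$, each via the 3-layer, 5-unit, 11-parameter gadget
\[
f(x)=\sigma\bigl(1-\sigma((a-x)/\varepsilon)\bigr)+\sigma\bigl(1-\sigma((x-c)/\varepsilon)\bigr)-1,
\]
with $\varepsilon=2^{-m-2}$. Then each bit $b_\ell$ is simply the \emph{sum} of the appropriate half of these $2^r$ indicators (one extra layer, $r$ units, $r(2^r+1)$ parameters), and the remainder is $2^r b-\sum_{\ell=1}^r 2^{r-\ell}b_\ell$ (one more layer, $1$ unit, $r+2$ parameters). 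That is exactly $5$ layers, $5\cdot 2^r+r+1$ units, $11\cdot 2^r+r2^r+2r+2$ parameters.

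Your Stage~C instead proposes to run a fresh batch of clipped ramps on $t$ to read off each $b_\ell$. Two problems: (i) those ramps sit strictly \emph{after} Stage~A's ramps and the summation producing $t$, so you are already at depth $\geq 3$ before Stage~C begins and cannot finish in $5$ layers; (ii) with your accounting of ``$O(2^r)$ pieces per bit'', Stage~C alone costs $\Theta(r\,2^r)$ units, overshooting the $5\cdot 2^r+r+1$ target. The fix is to drop $t$ altogether: once you have the indicators (interval or half-interval), every $b_\ell$ is a fixed \emph{linear} combination of them, so Stage~C collapses to a single affine layer of $r$ units. At that point you are back to the paper's architecture, and the specific constants $5,11$ come from the particular 3-layer interval-indicator gadget above rather than from a 2-layer half-interval ramp.
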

\Cref{fig:bit_extract} shows a schematic of the ReLU network in the above lemma.
\begin{figure}[h]
\centering
\includegraphics[width=15cm]{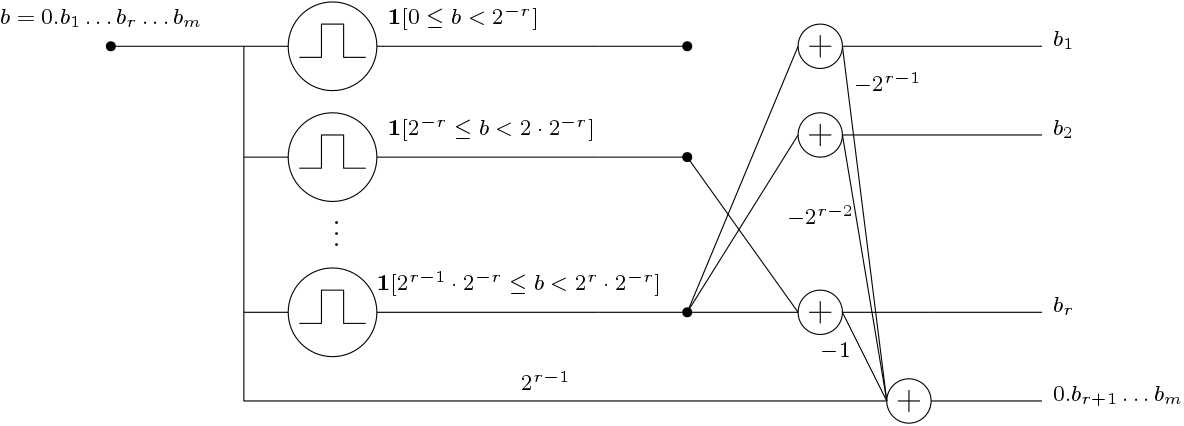}
\caption{The ReLU network used to extract the most significant $r$ bits of a number.
Unlabeled edges indicate a weight of 1 and missing edges indicate a weight of 0.}
\label{fig:bit_extract}
\end{figure}

\begin{proof}
Partition $[0,1)$ into $2^{r}$ even subintervals.
Observe that the values of $b_1,\dots,b_r$ are determined by knowing which such subinterval $b$ lies in.
We first show how to design a two-layer ReLU network that computes the indicator function for an interval to any arbitrary precision.
Using $2^r$ of these networks in parallel allows us to determine which subinterval $b$ lies in and hence, determine
the bits $b_1,\dots,b_r$.

For any $a \leq b$ and $\eps > 0$, observe that the function
$f(x) \coloneqq \sigma(1-\sigma(a/\eps-x/\eps))+\sigma(1-\sigma(x/\eps-b/\eps))-1$
has the property that,
$f(x)=1$ for $x\in[a,b]$,
and $f(x)=0$ for $x\notin(a-\eps,b+\eps)$,
and $f(x)\in[0,1]$ for all $x$.
Thus we can use $f$ to approximate the indicator function for $[a,b]$, to any desired precision.
Moreover, this function can be computed with 3 layers, 5 units, and 11 parameters as follows.
First, computing $\sigma(a/\eps-x/\eps)$ can be done with 1 unit, 1 layer, and 2 parameters.
Computing $\sigma(1-\sigma(a/\eps-x/\eps))$ can be done with 1 additional unit, 1 additional layer, and 2 additional parameters.
Similarly, $\sigma(1-\sigma(x/\eps-b/\eps))$ can be computed with 2 units, the same 2 layers, and 4 parameters.
Computing the sum can be done with 1 additional layer, 1 additional unit, and 3 additional parameters.
In total, computing $f$ can be done with 3 layers, 5 units, and 11 parameters.
We will choose $\eps = 2^{-m-2}$ because we are working with $m$-digit numbers.

Thus, the values $b_1,\dots,b_r$ can be generated by adding the corresponding indicator variables.
(For instance, $b_1 = \sum_{k = 2^{r-1}}^{2^r - 1} \mathbf{1}[b \in [k \cdot 2^{-r}, (k+1) \cdot 2^{-r})]$.)
Finally, the remainder $0.b_{r+1}b_{r+2}\dots b_{m}$ can be computed as $0.b_{r+1}b_{r+2} \ldots b_{m} = 2^{r} b - \sum_{k=1}^r 2^{r-k} b_k$.

Now we count the number of layers and parameters: we use $2^r$ small networks that work in parallel for producing the indicators, each has 3 layers, 5 units and 11 parameters.
To produce $b_1,\dots,b_r$ we need an additional layer, $r\times (2^r+1)$ additional parameters,
and $r$ additional units.
For producing the remainder we need 1 more layer,
1 more unit, and $r+2$ more parameters.
\end{proof}

We use $\lceil m/r \rceil$ of these blocks to extract the bits of $a_j$, denoted by  $a_{j,1},\dots,a_{j,m}$.
Extracting $a_{j,i}$ is now easy, noting that if $x,y\in\{0,1\}$ then $x \wedge y = \sigma(x+y-1)$.
So, since $x_2=e_i$, we have
\[
a_{j,i} = \sum_{t=1}^{m} x_{2,t} \wedge a_{j,t}
= \sum_{t=1}^{m} \sigma(x_{2,t} + a_{j,t}-1)
= \sigma\left(\sum_{t=1}^{m} \sigma(x_{2,t} + a_{j,t}-1)\right).
\]
This calculation needs 2 layers, $1+m$ units, and $1+4m$ parameters.

\begin{remark}
\label{barrier}
\Cref{thm:bitextraction} implies an inherent barrier to proving lower
bounds using the ``bit extraction'' approach from \citep{BMM98}.
Recall that this technique uses $n$ binary numbers with $m$ bits to encode a function $f \colon S_n \times S_m \to \{0,1\}$
to show an $\Omega(mn)$ lower bound for VC-dimension, where $S_k$ denotes the set of standard basis vectors in $\Re^k$.
The network begins by selecting one of the $n$ binary numbers, and then extracting a particular  bit of that number.
\citep{BMM98} shows that it is possible to take $m=\Omega(L)$ and  $n=\Omega(W)$, thus proving a lower bound of $\Omega(WL)$ for the VC-dimension.
In \Cref{splitintoblocks} we showed we can increase $m$ to $\Omega(L \log(W/L) )$, improving the lower bound  to $\Omega(WL \log(W/L))$.
\Cref{thm:bitextraction} implies that 
to extract just the least significant bit, one is forced to have $m=O(L \log (W/L))$;
on the other hand, we always have $n\leq W$.
Hence there is no way to improve the VC-dimension lower bound by more than a constant via the bit extraction technique.
In particular, for general piecewise polynomial networks, closing the
gap between the $O(WL^2+WL\log W)$ of \citep{BMM98} and $\Omega(WL\log
W/L)$ of this paper will require a different technique.
\end{remark}

\section{Proof of Theorem~\ref{thm:bitextraction}}
\label{sec:bitextractionbound}

For a piecewise polynomial function $\Re \to \Re$,
\emph{breakpoints} are the boundaries between the pieces.
So if a function has $p$ pieces, it has $p-1$ breakpoints.

\begin{lemma}
\label{lem:breakpointgrowth}
Let $f_1,\dots,f_k:\R\to\R$ be piecewise polynomial of degree $D$,
and suppose the union of their breakpoints has size $B$.
Let $\psi:\R\to\R$ be piecewise polynomial of degree $d$ with $b$ breakpoints.
Let $w_1,\dots,w_k\in\R$ be arbitrary.
The function $g(x)\coloneqq\psi(\sum_i w_i f_i(x))$ is piecewise polynomial of degree $Dd$ with at most
$(B+1)(2+bD)-1$ breakpoints.
\end{lemma}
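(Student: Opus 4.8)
The plan is to peel the composition apart: first understand the inner function $h(x) \coloneqq \sum_{i} w_i f_i(x)$, and then track how composing with $\psi$ refines its piecewise structure. For the inner function, observe that on each interval of the complement of $\bigcup_i(\text{breakpoints of }f_i)$ every $f_i$ coincides with a single polynomial of degree at most $D$, hence so does $h$; therefore $h$ is piecewise polynomial of degree at most $D$, its breakpoint set is contained in that union (so it has at most $B$ breakpoints), and $\R$ is partitioned into at most $B+1$ intervals $I$ on each of which $h$ equals a single polynomial $q_I$ with $\deg q_I \le D$.

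Now fix such an interval $I$ and consider $g|_I = \psi \circ q_I$. Away from its breakpoints, $\psi$ is a single polynomial of degree at most $d$, so $g|_I$ can fail to be a single polynomial only at the points $x \in I$ where $q_I(x)$ hits one of the $b$ breakpoints of $\psi$. For each such breakpoint $\beta$, either $q_I \equiv \beta$ (in which case $g$ is constant on $I$ and introduces no breakpoint there), or $q_I(x) = \beta$ has at most $\deg q_I \le D$ solutions in $I$. Hence $g$ has at most $bD$ breakpoints inside $I$, splitting $I$ into at most $bD+1$ subpieces, and on each of them $g = \psi_j \circ q_I$ for one polynomial piece $\psi_j$ of $\psi$, so $g$ is a polynomial of degree at most $Dd$ there. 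Summing over the $\le B+1$ intervals, the breakpoints of $g$ are at most the $B$ breakpoints inherited from $h$ plus at most $(B+1)\,bD$ breakpoints created inside the pieces, giving at most $B + (B+1)bD = (B+1)(bD+1)-1 \le (B+1)(2+bD)-1$ breakpoints in total, with each piece of degree at most $Dd$.

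The one delicate point is making sure the list of candidate breakpoints of $g$ is exhaustive, i.e.\ that wherever $x$ is neither a breakpoint of $h$ nor a preimage under $q_I$ of a breakpoint of $\psi$, the function $g$ locally equals a fixed polynomial composed with a fixed polynomial; this is routine once the partition is set up as above. One should also keep the degenerate cases in mind — $q_I$ constant, $q_I$ merely tangent to a level $\beta$, or the two pieces of $\psi$ at $\beta$ happening to agree — but these only ever decrease the true number of breakpoints, so the counts above remain valid upper bounds. I do not expect a serious obstacle here; the argument is essentially careful bookkeeping, and in fact it yields the slightly stronger bound $(B+1)(bD+1)-1$.
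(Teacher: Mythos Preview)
Your proof is correct and follows essentially the same approach as the paper: partition $\R$ into the $B+1$ intervals on which $h=\sum_i w_i f_i$ is a single polynomial $q_I$ of degree at most $D$, and on each such interval count the preimages under $q_I$ of the $b$ breakpoints of $\psi$. Your bookkeeping is in fact a little cleaner than the paper's (which counts $2+bD$ pieces per interval rather than $bD+1$), and this is exactly why you obtain the slightly sharper bound $(B+1)(bD+1)-1$.
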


\begin{proof}
Without loss of generality, assume that $w_1=\dots=w_k=1$.
The function $\sum_i f_i$ has $B+1$ pieces.
Consider one such interval $\mathcal I$.
We will prove that it will create at most $2+bD$ pieces in $g$.
In fact, if $\sum_i f_i$ is constant on $\mathcal I$, $g$ will have 1 piece on $\mathcal I$.
Otherwise, for any point $y$, the equation $\sum_i f_i(x) = y$ has at most $D$ solutions on $\mathcal I$. 
Let $y_1,\dots,y_b$ be the breakpoints of $\psi$.
Suppose we move along the curve $(x,\sum_i f_i(x))$ on $\mathcal I$.
Whenever we hit a point $(t,y_i)$ for some $t$, one new piece is created in $g$.
So at most $bD$ new pieces are created.
In addition, we may have two pieces for the beginning and ending of $\mathcal I$. This gives a total of $2+bD$ pieces per interval, as required.
Finally, note that the number of breakpoints is one fewer than the number of pieces.
\end{proof}

Theorem~\ref{thm:bitextraction} follows immediately from the following theorem.

\begin{theorem}
\label{thm:bitextract_barrier}
Assume there exists  a neural network with $W$ parameters and $L$ layers that computes a function $f:\R \to \R$, with the property that
$|f(x) - (x \mod 2)| < 1/2$ for all $x\in\{0,1,\dots,2^{m}-1\}$.
Also suppose the activation functions are piecewise polynomial of degree at most $d\geq1$ in each piece, and have at most $p\geq1$ pieces.
Then we have
$$m \leq L \log_2 (13pd^{(L+1)/2}\cdot W/L). $$
In the special case of piecewise linear functions, this gives
$m = O(L \log(W/L)). $
\end{theorem}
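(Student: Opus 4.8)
The plan is to play two counting arguments against each other: a function that agrees with $x \mod 2$ up to error $1/2$ on the $2^m$ integers $\{0,1,\dots,2^{m}-1\}$ must oscillate at least $2^{m}-1$ times, whereas a function computed by a piecewise-polynomial network with $L$ layers and $W$ parameters cannot oscillate too often because it is a univariate piecewise polynomial with controlled degree and a controlled number of pieces.

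First I would make the oscillation lower bound precise. If $|f(x)-(x\mod 2)|<1/2$ for every integer $x\in\{0,\dots,2^{m}-1\}$, then $f(k)<1/2$ for even $k$ and $f(k)>1/2$ for odd $k$; in particular each such $f(k)$ is bounded away from $1/2$, so for all sufficiently small $\delta>0$ the sign relations ``$f(k)-(1/2-\delta)<0$ for even $k$, $>0$ for odd $k$'' still hold. Since $f$ has only finitely many polynomial pieces, I can also pick $\delta$ so that no piece of $f$ is identically equal to $1/2-\delta$. Now $f-(1/2-\delta)$ flips sign across each of the $2^{m}-1$ consecutive integer pairs, and such a flip in the open unit interval $(k,k+1)$ is witnessed by either a genuine zero of $f-(1/2-\delta)$ inside that interval (lying in some piece, on which $f-(1/2-\delta)$ is a nonzero polynomial) or a breakpoint of $f$ inside it; these witnesses are disjoint across the $2^{m}-1$ pairs. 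Hence, writing $N$ for the number of pieces of $f$ and $D$ for its degree,
\[
2^{m}-1 ~\le~ ND + (N-1),\qquad\text{so}\qquad 2^{m}\le 2ND .
\]

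Second I would bound $N$ and $D$ by induction on layers using \Cref{lem:breakpointgrowth}. The network input is the single real variable $x$ (degree $1$, no breakpoints). A unit in layer $i$ computes $\psi$ applied to an affine combination of outputs of earlier units; those outputs have degree at most $d^{i-1}$ and, if $B_{i-1}$ denotes the total number of breakpoints among all units in layers $1,\dots,i-1$, their breakpoints number at most $B_{i-1}$. By \Cref{lem:breakpointgrowth} (with the lemma's ``$D$''$=d^{i-1}$ and ``$b$''$=p-1$) each layer-$i$ unit then has degree at most $d^{i}$ and at most $(B_{i-1}+1)(2+(p-1)d^{i-1})-1$ breakpoints. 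Summing over the $k_i$ units of layer $i$ gives $B_i+1\le (B_{i-1}+1)\bigl(1+k_i(2+(p-1)d^{i-1})\bigr)\le 4k_i p\, d^{i-1}(B_{i-1}+1)$, and the output unit (identity activation) adds no breakpoints and has degree at most $d^{L-1}$. Unrolling,
\[
N ~\le~ B_{L-1}+1 ~\le~ \prod_{i=1}^{L-1} 4 k_i p\, d^{i-1} ~=~ 4^{L-1}p^{L-1}d^{(L-1)(L-2)/2}\prod_{i=1}^{L-1}k_i,\qquad D\le d^{L-1}.
\]

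Finally I would combine and optimize. Substituting, $2^{m}\le 2ND\le 2\cdot 4^{L-1}p^{L-1}d^{(L-1)L/2}\prod_{i=1}^{L-1}k_i$; since the number of hidden units satisfies $\sum_{i=1}^{L-1}k_i\le U\le W$, AM--GM gives $\prod_{i=1}^{L-1}k_i\le (W/(L-1))^{L-1}$, whence $m=O\bigl(L\log(p\,d^{L/2}W/L)\bigr)$, which for $d=1$ is the claimed $m=O(L\log(W/L))$; the stated constant $13$ and the $d^{(L+1)/2}$ (rather than $d^{L/2}$) come from carrying the same bookkeeping through a little more carefully. The main obstacle is this inductive step: one has to propagate the degree blow-up $d^{i}$ and the breakpoint blow-up simultaneously through the layers and then bound $\prod k_i$ in the right way so that only a single logarithmic factor survives after taking logs. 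It is worth remarking that, unlike typical VC-dimension upper bounds, this argument never uses Warren's lemma: because the network input here is one-dimensional, it suffices to count zeros and breakpoints of a single univariate piecewise-polynomial function.
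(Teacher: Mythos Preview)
Your argument is correct and follows the same high-level strategy as the paper: lower-bound the number of sign changes of $f-1/2$ by $2^m-1$, upper-bound it by (pieces)$\times$(degree) of the univariate piecewise polynomial computed by the network, propagate a breakpoint bound through the layers via \Cref{lem:breakpointgrowth}, and finish with AM--GM to turn a per-layer product into $(W/L)^{O(L)}$.

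The bookkeeping differs in a way worth noting. The paper attaches to each node $v$ the quantity $\gamma(v)$, the number of input-to-$v$ directed paths, proves inductively that the number of breakpoints at $v$ is at most $(6p)^{i}d^{i(i-1)/2}\gamma(v)-1$, and then bounds $\gamma(o)\le\prod_i(1+W_i)\le(2W/L)^{L}$ by AM--GM on the \emph{edge} counts $W_i$. You instead carry a single cumulative quantity $B_i$, the total number of breakpoints among all units in layers $1,\dots,i$, obtain $B_i+1\le(B_{i-1}+1)\cdot 4k_ipd^{i-1}$, and apply AM--GM on the \emph{unit} counts $k_i$ via $\prod_{i\le L-1}k_i\le(W/(L-1))^{L-1}$. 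Both routes land on the same $m=O\bigl(L\log(p\,d^{L/2}W/L)\bigr)$; your variant is arguably a touch cleaner and in fact yields $d^{L/2}$ rather than the paper's $d^{(L+1)/2}$ (so your parenthetical that the paper's exponent comes from being ``more careful'' has it backwards---the paper's constants are slightly looser, not tighter). Your treatment of the oscillation count, perturbing $1/2$ to $1/2-\delta$ to avoid pieces identically equal to the level and then charging each sign flip to either a zero or a breakpoint, is also more careful than the paper's one-line appeal to ``$f$ equals $1/2$ at least $2^m-1$ times.''
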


\begin{proof}
For a node $v$ of the network, let $\gamma(v)$ count the number of directed paths from the input node to $v$.
Applying Lemma~\ref{lem:breakpointgrowth} iteratively gives that for a node $v$ at layer $i\geq 1$, the number of breakpoints is bounded by
$(6p)^{i} d^{i(i-1)/2} \gamma(v)-1$.
Let $o$ denote the output node.
Hence, $o$ has at most $(6p)^{L} d^{L(L-1)/2} \gamma(o)$ pieces.
The output of node $o$ is piecewise polynomial of degree at most $d^L$.
On the other hand, as we increase $x$ from 0 to $2^m-1$, the 
function $x \mod 2$ flips  $2^m-1$ many times, which implies the output of $o$ becomes equal to $1/2$ at least $2^m-1$ times, thus we get
\begin{equation}
\label{1}
(6p)^{L} d^{L(L-1)/2} \gamma(o) \times d^L \geq 2^m - 1.
\end{equation}
Let us now relate $\gamma(o)$ with $W$ and $L$.
Suppose that, for $i\in[L]$, there are $W_i$ edges between layer $i$ and previous layers.
By the AM-GM inequality,
\begin{equation}
\label{2}
\gamma(o) \leq \prod_i (1+W_i) \leq \left(\sum_i \frac{1+W_i}{L}\right)^L \leq (2W/L)^L.
\end{equation}
Combining~\cref{1,2} gives the theorem.
\end{proof}

\citet{T16} showed how to construct a function $f$ which satisfies $f(x) = (x \mod 2)$ for $x \in \{0,1, \ldots, 2^m-1\}$ using a neural network with $O(m)$ layers and $O(m)$ parameters.
By choosing $m = k^3$, Telgarsky showed that any function $g$ computable by a neural network with $\Theta(k)$ layers and $O(2^k)$ nodes must necessarily have $\|f - g\|_1 > c$ for some constant $c > 0$.

Our theorem above implies a qualitatively similar statement.
In particular, if we choose $m = k^{1+\eps}$ then for any function $g$ computable by a neural network with $\Theta(k)$ layers and $O(2^{k^\eps})$ parameters, there must exist $x \in \{0, 1, \ldots, 2^m - 1\}$ such that $|f(x) - g(x)| > 1/2$.

\section{Proof of Theorem~\ref{thm:wllogw}}
\label{sec:wllogw}
The proof of this theorem is very similar to the proof of the upper
bound for piecewise polynomial networks from \citep[Theorem~1]{BMM98} but optimized in a few places.
The main technical tool in the proof is a bound on the growth function
of a polynomially parametrized function class, due to \citet{GJ95}. It uses an argument
involving counting the number of connected components of
semi-algebraic sets.
The form stated here 
is \citep[Lemma~1]{BMM98}, which is
a slight improvement of a result of \citet{W67} (the proof can be
found in \citep[Theorem~8.3]{AB99}).

\begin{lemma}\label{lem:warren} 
	Let $p_1, \ldots, p_m$ be polynomials of degree at most $d$ in $n \leq m$ variables.
    Define
    \[
    	K \coloneqq |\{(\sgn(p_1(x)), \ldots, \sgn(p_m(x)) : x \in \R^n \}|,
    \]
    i.e.~$K$ is the number of possible sign vectors given by the polynomials.
    Then $K \leq 2(2emd/n)^n$.
\end{lemma}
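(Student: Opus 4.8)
This is a classical fact going back to Warren, so one could just cite it; but here is how I would reconstruct it. The plan is to reduce the combinatorial count of realizable sign patterns to a bound on the number of connected components of a real semialgebraic set, and then to invoke Milnor--Thom / Bézout-type estimates for real varieties. Since $\sgn(x) = \mathbf{1}[x>0]$ takes only two values, the number of vectors $(\sgn p_1(x),\dots,\sgn p_m(x))$ is no larger than the number of the finer ``three-valued'' patterns recording, for each $i$, whether $p_i(x)$ is negative, zero, or positive; so it suffices to bound the latter. Each three-valued pattern with no zero coordinate is constant on a connected component (cell) of $\R^n \setminus \bigcup_{i=1}^m \{x : p_i(x)=0\}$, by continuity and connectedness, so these contribute at most the number $C$ of cells. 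Patterns with some zero coordinate live on the lower-dimensional strata of the arrangement; I would absorb their contribution into the final bound either by a generic perturbation $p_i \mapsto p_i + \eps \ell_i$ (small $\eps$, generic affine $\ell_i$), which only increases the count while removing degeneracies, or by a stratification/induction on $n$. Thus, up to constants, it suffices to bound $C$.

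For the cell count, the naive route --- bounding $C$ by the number of connected components of $\{x : \prod_i p_i(x)\neq 0\}$ via Milnor--Thom applied to a single polynomial of degree $md$ --- is too lossy: it yields roughly $(md)^n$ and loses the decisive factor $n^{-n}$. The right approach exploits the arrangement structure. Fix a generic linear functional $\lambda$; the restriction of $\lambda$ to the closure of any bounded cell attains its minimum at a point that is a constrained critical point of $\lambda$ on the common zero set of at most $n$ of the hypersurfaces $\{p_i=0\}$. There are only $\binom{m}{\le n} = \sum_{j=0}^{n}\binom{m}{j}$ such choices of hypersurfaces, and for each the relevant critical points form the solution set of a polynomial system bounded via Bézout/Milnor--Thom by $d^{O(n)}$, while each such point is the $\lambda$-minimizer of only $2^{O(n)}$ cells. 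Unbounded cells are reduced to bounded ones by intersecting with a large ball, or to a lower-dimensional arrangement by projection. This gives $C \le \binom{m}{\le n}\, d^{O(n)}$; combining with $\binom{m}{\le n} \le (em/n)^n$ (valid since $n\le m$) and tracking constants carefully yields the claimed $K \le 2(2emd/n)^n$.

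The main obstacle is exactly this second step: squeezing out the $n^{-n}$ saving by using that only $n$ of the polynomials can be ``active'' at an extreme point of a cell, rather than crudely bounding via the product polynomial, and then being careful enough with the constants to land on $2(2e\cdot)^n$ rather than a cruder constant. Since the statement is entirely standard, in the paper one may simply cite it --- a complete proof is in \citep[Theorem~8.3]{AB99} --- and the sketch above indicates how it would be reconstructed.
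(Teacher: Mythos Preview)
Your proposal is correct, and in fact you already anticipated the paper's treatment: the paper does not prove this lemma at all but simply states it and cites \citep[Lemma~1]{BMM98} and \citep[Theorem~8.3]{AB99} for the proof. Your sketch via connected components of the arrangement, with the crucial observation that one must extract the $n^{-n}$ saving from $\binom{m}{\le n}$ rather than crudely applying Milnor--Thom to the product polynomial, is the standard route and matches what those references do.
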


\begin{proof}[of \Cref{thm:wllogw}].
For input $x\in\calX$ and parameter vector $a\in \Re^W$,
let $f(x,a)$ denote the output of the network.
The $\cF$ is simply the class of functions $\{x\mapsto f(x,a): a\in\Re^W\}$.

Fix $x_1,x_2,\ldots,x_m$ in $\calX$.
We view the parameters of the network, denoted $a$, as a collection of $W$ real variables.
We wish to bound
\[
K \coloneqq \left|\left\{
 (\sgn(f(x_1,a)),\ldots,\sgn(f(x_m,a))) : a\in \Re^W
\right\}\right|.
\]
In other words, $K$ is the number of sign patterns that the neural network can output for the sequence of inputs $(x_1, \ldots, x_m)$.
We will prove geometric upper bounds for $K$, which will imply upper bounds for $\growth_{\sgn(\cF)}(m)$

For any partition
$\calS= \{P_1,P_2,\ldots,P_N\}$ of the parameter domain $\Re^W$,
clearly we have
\begin{equation}\label{unionbound}
K \le \sum_{i=1}^N \left|\left\{(\sgn(f(x_1,a)),\ldots,\sgn(f(x_m,a))):
	a\in P_i\right\}\right|.
\end{equation}
We choose the partition in such a way that within each region $P_i$,
the functions $f(x_j,\cdot)$ are all fixed
polynomials of bounded degree, so that each term in this sum can be bounded via Lemma~\ref{lem:warren}.

The partition is constructed iteratively layer by layer,
through a sequence $\calS_0,\calS_1,\calS_2,\ldots,\calS_{L-1}$
of successive refinements, with the following properties:

\begin{enumerate}
\item
We have $|\calS_0|=1$
and, for each $n\in[L-1]$,
\begin{equation}\label{partitions}
\frac{|\calS_n|}{|\calS_{n-1}|} \leq 
2 \left( \frac{2 e m k_n p (1 + (n-1)d^{n-1}) }{W_n}\right)^{W_n}
\end{equation}

\item
For each $n\in\{0,\dots,L-1\}$,
each element $S$ of $\calS_{n-1}$,
each $j\in [m]$,
and each unit $u$ in the $n$th layer,
when $a$ varies in $S$, 
the net input to $u$ is a fixed polynomial function 
in $W_n$ variables of $a$,
of total degree no more than $1 + (n-1)d^{n-1}$
(this polynomial may depend on $S$, $j$ and $u$).
\end{enumerate}

We may define $\calS_0=\Re^{W}$,
which satisfies property 2 above,
since the input to any node in layer 1 is of the form $w^Tx_j + b$, which is an affine function of $w,b$.

Now suppose that 
$\calS_0,\dots,\calS_{n-1}$ have been defined,
and we want to define $\calS_n$.
For any $h\in[k_n],j\in[m],$ and $S\in \calS_{n-1}$,
let $p_{h,x_j,S}(a)$ denote the function describing the net input
of the $h$-th unit in the $n$-th layer, in response to $x_j$,
when $a\in S$.
By the induction hypothesis this is a polynomial with total degree no more than $1+(n-1)d^{n-1}$,
and depends on at most $W_{n}$ many variables.

Let $\{t_1,\dots,t_p\}$ denote the set of breakpoints of the activation function.
For any fixed $S\in\calS_{n-1}$,
by Lemma~\ref{lem:warren},
the collection of polynomials
\[\big\{ 
p_{h,x_j,S}(a)-t_i
 : h \in [k_n], j\in[m], i \in[p] \} \]
attains at most 
\[\Pi\coloneqq 2 (2 e (k_n m p)(1+(n-1)d^{n-1}) /W_n)^{W_n}\] 
distinct
sign patterns when $a \in \Re^W$.
Thus, one can partition $\Re^W$ into this many regions,
such that all these polynomials have the same signs within each region.
We intersect all these regions with $S$
to obtain a partition of $S$ into at most
$\Pi$
subregions.
Performing this for all $S\in\calS_{n-1}$ gives our desired partition $\calS_{n}$.
Thus, the required property 1 (inequality (\ref{partitions})) is clearly satisfied.

Fix some $S'\in \calS_n$.
Notice that, when $a$ varies in $S'$,
all the polynomials
\[\big\{ 
p_{h,x_j,S}(a)-t_i
 : h \in [k_n], j\in[m], i \in[p] \} \]
have the same sign, hence the 
\emph{input} of each $n$th layer unit lies between two breakpoints of the activation function, hence the
\emph{output} of each $n$th layer unit in response to an
$x_j$ is a fixed polynomial in $W_n$ variables
of degree no more than $d(1+(n-1)d^{n-1})\le n d^n$.
This implies that the \emph{input} of every $(n+1)$th layer unit  in response to an $x_j$
is a fixed polynomial function of $W_{n+1}$ variables of
 degree no more than $1+nd^n$. 
(When $d=0$, this
affine function depends only on the $W_{n+1}$ parameters in layer
$n+1$; for $d>0$, it is a polynomial function of all parameters up to
layer $n+1$.)

Proceeding in this way we obtain a partition $\calS_{L-1}$ of $\Re^W$ such that
for $S\in\calS_{L-1}$ the network output in response to any $x_j$ is a 
fixed polynomial of $a\in S$ of degree no more than
$1+(L-1)d^{L-1}$ (recall that the last node just outputs its input),
and hence by Lemma~\ref{lem:warren} again,
  \begin{align*}
    \left|\left\{\left(\sgn(f(x_1,a)),\ldots,\sgn(f(x_m,a))\right):
    a\in S\right\}\right|
    & \le 2 \left(\frac{2em(1+(L-1)d^{L-1})}{ W_L}\right)^{ W_L}.
  \end{align*}
On the other hand, applying (\ref{partitions}) iteratively gives
\[|\calS_{L-1}|\leq
	\prod_{i=1}^{L-1}
        2\left(\frac{2emk_ip(1+(i-1)d^{i-1})}{ W_i}
			\right)^{ W_i},\]
and thus using (\ref{unionbound}),
and since the points $x_1,\ldots,x_m$ were chosen arbitrarily,
we obtain
\begin{align*}
 \growth_{\sgn(\cF)}(m) 
 & \le \prod_{i=1}^{L}  2\left(\frac{2emk_ip(1+(i-1)d^{i-1})}{ W_i}
            \right)^{ W_i} \\
            & \leq
            2^L\left(\frac{2 e m p \sum k_i(1+(i-1)d^{i-1})}{\sum W_i}\right)^{\sum W_i} & \textnormal{(weighted AM-GM)} \\
                        & =
                        2^L\left(\frac{2 e m p R}{\sum W_i}\right)^{\sum W_i} &\textnormal{(definition of $R$ in (\ref{r-def}))}\\
                        & \leq
                        \left(\frac{4 e m p (1+(L-1)d^{L-1}) \sum k_i}{\sum W_i}\right)^{\sum W_i} & (L\leq \sum W_i)\\
                        & \leq \left(4 e m p (1+(L-1)d^{L-1}) \right)^{\sum W_i} & (\sum k_i \leq \sum W_i).
\end{align*}
For the bound on the VC-dimension,
from the third line in the formula above, and the definition of VC-dimension, we find
\[
2^{\VCdim(\cF)} =
 \growth_{\sgn(\cF)}(\VCdim(\cF)) 
\leq            2^L\left(\frac{2 e  p R \cdot \VCdim(\cF)}{\sum W_i}\right)^{\sum W_i}  \]
Notice that $U>2$ implies $2eR\geq 16$,
hence Lemma~\ref{growthtovc} below gives
\begin{align*}
  \VCdim(\cF) \le 
  L + (\sum W_i) \log_2(4ep R \log_2 (2epR))
  = O(\bar L W \log (pU) + \bar L L W \log d),
\end{align*}
completing the proof.
%
%
\end{proof}

\begin{lemma}\label{growthtovc}
Suppose that $2^m \leq 2^t (mr/w)^w$
for some $r\geq16$ and $m \geq w\geq t \geq 0$.
Then, $m \leq t + w \log_2 (2r \log_2 r)$.
\end{lemma}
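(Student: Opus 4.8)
The plan is to take base-2 logarithms and reduce everything to a single-variable inequality. Taking $\log_2$ of the hypothesis $2^m \le 2^t(mr/w)^w$ gives $m-t \le w\log_2(mr/w)$; writing $A \coloneqq m/w$ (so $A\ge 1$ since $m\ge w$, while the degenerate case $w=0$ forces $m=0$ and is trivial), this reads $m-t \le w\log_2(Ar)$. Hence the claimed bound $m \le t + w\log_2(2r\log_2 r)$ will follow as soon as we establish $A \le 2\log_2 r$, because then $\log_2(Ar)\le\log_2(2r\log_2 r)$.

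To bound $A$ from above I would first squeeze out a lower bound for $m-t$ using the side conditions: since $t\le w$ we have $m-t\ge m-w=(A-1)w$. Combining this with $m-t\le w\log_2(Ar)$ and dividing by $w>0$ yields the self-referential inequality $A-1\le \log_2 A+\log_2 r$. It then remains to show that this inequality forces $A\le 2\log_2 r$ whenever $r\ge16$.

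For this last step I would analyze $g(A)\coloneqq A-1-\log_2 A-\log_2 r$. One checks $g'(A)=1-1/(A\ln 2)>0$ for $A\ge 1/\ln 2$, so $g$ is increasing on $[1/\ln2,\infty)$. Setting $y\coloneqq\log_2 r\ge 4$, a short computation gives $g(2\log_2 r)=y-2-\log_2 y$, which equals $0$ at $y=4$ and is increasing in $y$, hence $g(2\log_2 r)\ge 0$ for every $r\ge16$. Since $r\ge 16$ also gives $2\log_2 r\ge 8>1/\ln 2$, monotonicity shows that any $A>2\log_2 r$ would satisfy $g(A)>g(2\log_2 r)\ge 0$, contradicting $g(A)\le 0$; and if $A\le 1/\ln 2$ then $A\le 2\log_2 r$ already. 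Thus $A\le 2\log_2 r$ in all cases, and substituting back into $m-t\le w\log_2(Ar)$ completes the argument.

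The only delicate ingredient is the numerical constant: the inequality $g(2\log_2 r)\ge 0$ is tight exactly at $r=16$, where it becomes an equality, which is precisely why the hypothesis demands $r\ge16$. I expect verifying this boundary case together with the monotonicity of $y-2-\log_2 y$ — rather than any conceptual difficulty — to be the main thing one must get right.
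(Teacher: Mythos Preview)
Your argument is correct. Both your proof and the paper's take logarithms and reduce to a monotonicity argument, but the organization differs. The paper works directly with $f(x)=x-t-w\log_2(xr/w)$, plugs in the claimed threshold $x_0=t+w\log_2(2r\log_2 r)$, and verifies $f(x_0)\ge 0$ together with $f'(x)\ge 0$ for $x\ge x_0$; the condition $t\le w$ enters only at the very end when checking $f(x_0)\ge 0$. You instead use $t\le w$ at the outset to eliminate $t$ entirely, normalize by setting $A=m/w$, and reduce to the single-parameter inequality $A-1\le\log_2 A+\log_2 r$, whose analysis via $g(A)$ involves only $r$. Your route is a bit cleaner because the normalization collapses three parameters to one before any calculus happens; the paper's route is slightly more direct because it evaluates exactly at the claimed bound rather than proving an intermediate estimate on $A$. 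Either way the same boundary computation (tight at $r=16$, i.e.\ $y=\log_2 r=4$) drives the constant.
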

\begin{proof}
We would like to show that
$2^x > 2^t (xr/w)^w$ for all 
$x>t + w \log_2 (2r \log_2 r) \eqqcolon m$.
Let $f(x) \coloneqq  x - t - w \log_2(xr/w)$.
To show that $f(x)>0$ for all $x>m$, we need only show that  $f(m)\geq0$ and $f'(x)>0$ for all $x\geq m$.
First, $f(m)\geq0$ if and only if
\[
w \log_2 (2r \log_2 r)  - w \log_2(mr/w)\geq0,
\]
if and only if
\[
(2r \log_2 r)  - (mr/w)\geq0,
\]
if and only if
\[
2 \log_2 r  - (t + w \log_2 (2r \log_2 r))/w\geq0,
\]
if and only if
\[
2 \log_2 r  - t/w - \log_2 (2r \log_2 r)\geq0,
\]
if and only if
\[r^2 / 2 \log_2 r \geq 2^{t/w},\]
which holds since $r\geq 16$ and $t/w\leq 1$.
Finally, for $x\geq m$, we have $f'(x)\geq0$ if and only if
\[
1 - w /(x\ln(2)) \geq 0
\]
if and only if
\[x \geq w / \ln 2,\]
which holds since $r\geq16$ implies
$
x \geq m \geq w \log_2(2r \log_2 r) >  w/\ln 2$.
\end{proof}

\section{Proof of Theorem~\ref{thm:WU}}
\label{sec:wu}
The idea of the proof is that the sign of the output of a neural network can be expressed as a Boolean formula where each predicate is a polynomial inequality.
For example, consider the following toy network, where the activation function of the hidden units is a ReLU.
\begin{figure}[h]
\centering
\includegraphics[width=4cm]{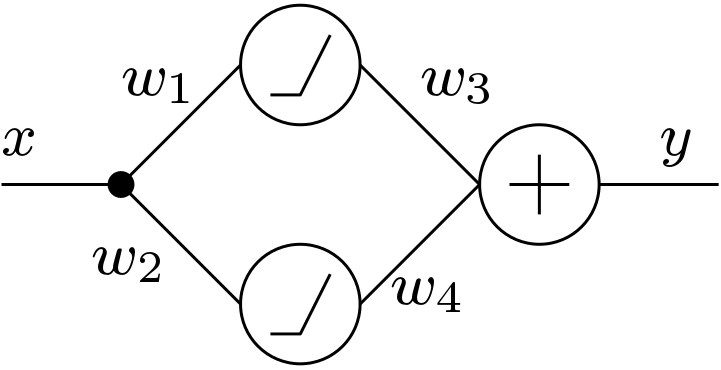}
\end{figure}

The sign of the output of the network is $\sgn(y) = \sgn(w_3 \sigma(w_1x) + w_4 \sigma(w_2x))$.
Define the following Boolean predicates: $p_1 = (w_1x > 0)$, $p_2 = (w_2x > 0)$, $q_1 = (w_3w_1x > 0)$, $q_2 = (w_4w_2x > 0)$, and $q_3 = (w_3w_1 x + w_4w_2 x > 0)$.
Then, we can write
\[
\sgn(y) = (\neg p_1 \wedge \neg p_2 \wedge 0) \vee (p_1 \wedge \neg p_2 \wedge q_1) \vee (\neg p_1 \wedge p_2 \wedge q_2) \vee (p_1 \wedge p_2 \wedge q_3).
\]

A theorem of Goldberg and Jerrum states that any class of functions that can be expressed using a relatively small number of distinct polynomial inequalities has small VC-dimension.

\begin{theorem}[Theorem~2.2 of \citet{GJ95}]
	\label{thm:goldberg}
	Let $k,n$ be positive integers and $f \colon \R^n \times \R^k \to \{0,1\}$ be a function that
    can be expressed as a Boolean formula containing $s$ distinct atomic predicates
    where each atomic predicate is a polynomial inequality or equality in $k + n$ variables of degree at most $d$.
    Let $\cF = \{f(\cdot, w) : w \in \R^k\}$.
    Then $\vcd(\cF) \leq 2k \log_2(8eds)$.
\end{theorem}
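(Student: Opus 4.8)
The plan is to reduce the shattering problem to counting sign conditions of a family of polynomials in the parameter vector $w$, and then to apply the Warren-type bound \Cref{lem:warren} together with the inversion \Cref{growthtovc}. Suppose $\cF$ shatters a set $\{x_1,\dots,x_m\}\subseteq\R^n$; the goal is to bound $m$. For each fixed $x_i$, substituting $x=x_i$ turns every atomic predicate $\phi_\ell$ ($\ell\in[s]$) into a polynomial (in)equality in the $k$ parameters $w$ of degree at most $d$; write $p_{i,\ell}(w)$ for the underlying polynomial. The key observation is that the Boolean formula computing $f$ reads only the truth values of the atomic predicates, and those truth values are determined by the sign conditions of the $p_{i,\ell}$. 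Hence the label vector $(f(x_1,w),\dots,f(x_m,w))$ is a function of the vector of sign conditions of the $ms$ polynomials $\{p_{i,\ell}\}$, so the number of distinct dichotomies of $\{x_1,\dots,x_m\}$ realized as $w$ ranges over $\R^k$ is at most the number of distinct sign-condition vectors of these polynomials.

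To count these sign conditions with the two-valued \Cref{lem:warren} (which tracks only $\mathbf 1[p>0]$), I would handle equality predicates by a doubling trick: the three-valued sign of each $p_{i,\ell}$ in $\{-,0,+\}$ is recoverable from the pair $(\mathbf 1[p_{i,\ell}>0],\,\mathbf 1[-p_{i,\ell}>0])$. Applying \Cref{lem:warren} to the $2ms$ polynomials $\{\pm p_{i,\ell}\}$, each of degree at most $d$ in the $k$ variables $w$, then bounds the number of sign patterns by $2(4emsd/k)^k$. The hypothesis $k\le 2ms$ needed by \Cref{lem:warren} may be assumed without loss of generality, since otherwise $m<k/2\le 2k\log_2(8eds)$ and the claimed bound already holds.

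Since $\{x_1,\dots,x_m\}$ is shattered, all $2^m$ dichotomies occur, giving $2^m\le 2(4emsd/k)^k$. It remains to solve this exponential-versus-polynomial inequality for $m$. This is exactly the form handled by \Cref{growthtovc}: taking $t=1$, $w=k$, and $r=4eds$ yields $m\le 1+k\log_2\bigl(8eds\log_2(4eds)\bigr)$, and since $2r=8eds$ matches the constant in the target bound, the residual factor $\log_2(4eds)$ and the additive $1$ are absorbed into the leading factor $2$ whenever $r\ge16$, giving $m\le 2k\log_2(8eds)$. The remaining small case $r<16$ (which forces $d=s=1$) is dispatched by a direct computation.

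The reduction in the first paragraph is the conceptual heart but is essentially a definitional unwinding; I expect the main obstacle to be the bookkeeping of the last two steps — correctly accounting for equality predicates so that the two-valued \Cref{lem:warren} suffices (which is precisely what produces the factor $8eds=2\cdot(4eds)$), and carefully inverting $2^m\le 2(4emsd/k)^k$ so as to land on the clean constant $2k\log_2(8eds)$ rather than a bound carrying a stray $\log\log$ term or a worse leading constant.
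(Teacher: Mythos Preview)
The paper does not contain a proof of this statement: \Cref{thm:goldberg} is quoted verbatim from \citet{GJ95} and used as a black box in the proof of \Cref{thm:WU}. There is therefore nothing in the paper to compare your proposal against.

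That said, your outline is essentially the Goldberg--Jerrum argument and is correct. The reduction to sign conditions of the $ms$ substituted polynomials, the doubling $p\mapsto(p,-p)$ to recover three-valued signs from the two-valued \Cref{lem:warren}, and the application of \Cref{lem:warren} to obtain $2^m\le 2(4emsd/k)^k$ are all sound. Your handling of the hypothesis $k\le 2ms$ of \Cref{lem:warren} and of the inversion via \Cref{growthtovc} (with $t=1$, $w=k$, $r=4eds$) is also fine; the inequality $1+k\log_2(8eds\log_2(4eds))\le 2k\log_2(8eds)$ does follow from $r\ge 16$ together with $k\ge 1$, and the residual case $4eds<16$ indeed forces $d=s=1$, where the function class consists of halfspaces in $k$ parameters and the bound is immediate.
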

\begin{proof}[of \cref{thm:WU}].
	Consider a neural network with $W$ weights and $U$ computation units, and assume that the activation function $\psi$ is piecewise polynomial of degree at most $d$ with $p$ pieces.
    To apply \cref{thm:goldberg}, we will express the sign of the output of the network as a Boolean function consisting of less than $2 (1+p)^{U}$ atomic predicates, each being a  polynomial inequality of degree at most $\max\{U+1, 2d^{U}\}$.

    Since the neural network graph is acyclic, it can be topologically sorted.
    For $i \in [U]$, let $u_i$ denote the $i$th computation unit in the topological ordering.
The input to each computation unit $u$ lies in one of the $p$ pieces of $\psi$. For $i\in[U]$ and $j\in[p]$, we say ``$u_i$ is in state $j$'' if the input to $u_i$ lies in the $j$th piece.

For $u_1$ and any $j$, 
the predicate ``$u_1$ is in state $j$'' is a single atomic predicate which is the quadratic inequality indicating whether its input lies in the corresponding interval.
So, the state of $u_1$ can be expressed as a function of $p$ atomic predicates.
Conditioned on $u_1$ being in a certain state,
the state of $u_2$ can be determined using $p$ atomic predicates, which are polynomial  inequalities of degree at most $2d+1$.
Consequently, the state of $u_2$ can be  determined using $p+p^2$ atomic predicates, each of which is a polynomial of degree at most $2d+1$.
Continuing similarly, we obtain that for each $i$, the state of $u_i$ can be determined using $p(1+p)^{i-1}$ atomic predicates, each of which is a polynomial of degree at most $d^{i-1} + \sum_{j=0}^{i-1} d^j$.
Consequently, the state of all nodes can be determined using 
less than $(1+p)^{U}$ atomic predicates, each of which is a polynomial of degree at most $d^{U-1} + \sum_{j=0}^{U-1} d^j \leq \max\{U+1, 2d^{U}\}$ (the output unit is linear).
Conditioned on all nodes being in certain states,
the sign of the output can be determined using one more atomic predicate, which is a  polynomial  inequality  of degree at most $\max\{U+1, 2d^{U}\}$.

    In total, we have less than $2 (1+p)^{U}$ atomic polynomial-inequality predicates and each polynomial has degree at most $\max\{U+1, 2d^{U}\}$.
    Thus, by \cref{thm:goldberg}, we get an upper bound of $2W \log(16 e \cdot \max\{U+1, 2d^{U}\} \cdot (1+p)^{U}) = O(WU \log ((1+d)p))$ for the VC-dimension.
\end{proof}

  \acks{Christopher Liaw is supported by an NSERC graduate scholarship.
Abbas Mehrabian is supported by an NSERC Postdoctoral Fellowship and a
Simons-Berkeley Research Fellowship.
Peter Bartlett gratefully acknowledges the support of the
NSF through grant IIS-1619362 and of the Australian Research Council
through an Australian Laureate Fellowship (FL110100281) and through
the Australian Research Council Centre of Excellence for Mathematical
and Statistical Frontiers (ACEMS).
Part of this work was done while Peter Bartlett and Abbas Mehrabian
were visiting the Simons Institute for the Theory of Computing at UC
Berkeley.}

\end{document}